\documentclass{article} 
\usepackage{iclr2019_arxiv,times}  

\usepackage{amsmath,amsfonts,bm}









\def\eqref#1{equation~\ref{#1}}









\def\1{\bm{1}}










\DeclareMathAlphabet{\mathsfit}{\encodingdefault}{\sfdefault}{m}{sl}
\SetMathAlphabet{\mathsfit}{bold}{\encodingdefault}{\sfdefault}{bx}{n}













\usepackage[hidelinks]{hyperref}
\usepackage{url}

\usepackage{amsthm}
\usepackage{nicefrac}
\usepackage[margin=1.25in]{geometry}
\usepackage{xcolor}
\usepackage{graphicx}
\usepackage{float}

\newcommand{\half}{\frac{1}{2}}

\newcommand{\matb}{\left( \begin{array} }
\newcommand{\mate}{\end{array}\right)}

\newcommand{\reals}{\mathbb{R}}

\newcommand{\sphere}[1]{\mathbb{S}^{#1-1}}
\newcommand{\aln}[1]{\begin{align}#1\end{align}}

\newcommand{\setA}{\mathcal{A} }
\newcommand{\setB}{\mathcal{B} }
\newcommand{\setC}{\mathcal{C} }
\newcommand{\setR}{\mathcal{R} }
\newcommand{\setS}{\mathcal{S} }

\newtheorem{theorem}{Theorem}
\newtheorem{lemma}{Lemma}
\newtheorem{definition}{Definition}

\DeclareMathOperator{\vol}{vol}
\DeclareMathOperator{\card}{card}
\DeclareMathOperator{\supp}{supp}

\title{Are adversarial examples inevitable?}

\author{Ali Shafahi,
Ronny Huang,
Christoph Studer,
Soheil Feizi \&
Tom Goldstein
}

\begin{document}
\maketitle

\begin{abstract}
A wide range of defenses have been proposed to harden neural networks against adversarial
attacks. However, a pattern has emerged in which the majority of adversarial defenses are
quickly broken by new attacks. Given the lack of success at generating robust defenses, we
are led to ask a fundamental question: Are adversarial attacks inevitable?
This paper analyzes adversarial examples from a theoretical perspective, and identifies fundamental bounds on the susceptibility of a classifier to adversarial attacks. We show that,
for certain classes of problems, adversarial examples are inescapable. Using experiments,
we explore the implications of theoretical guarantees for real-world problems and discuss
how factors such as dimensionality and image complexity limit a classifier's robustness
against adversarial examples.
\end{abstract}

\section{Introduction}

A number of adversarial attacks on neural networks have been recently proposed. To counter these attacks, a number of authors have proposed a range of defenses.  However, these defenses are often quickly broken by new and revised attacks.  Given the lack of success at generating robust defenses, we are led to ask a fundamental question:  Are adversarial attacks inevitable?

In this paper, we identify a broad class of problems for which adversarial examples cannot be avoided. We also derive fundamental limits on the susceptibility of a classifier to adversarial attacks that depend on properties of the data distribution as well as the dimensionality of the dataset.

\begin{figure}[b]
\centering
\begin{minipage}{.245\linewidth}
\centering
Original\\
\includegraphics[width=\linewidth]{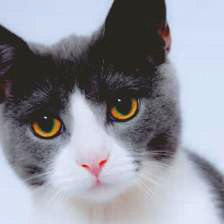}
egyptian cat (28\%)
\end{minipage}
%
%
\begin{minipage}{.245\linewidth}
\centering
 $\ell_2$-norm=10
\includegraphics[width=\linewidth]{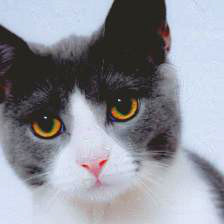}
 traffic light (97\%)
\end{minipage}
\begin{minipage}{.245\linewidth}
\centering
$\ell_\infty$-norm=0.05
\includegraphics[width=\linewidth]{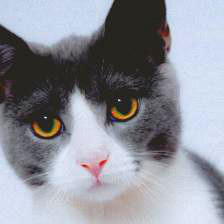}
 traffic light (96\%)
\end{minipage}
\begin{minipage}{.245\linewidth}
\centering
$\ell_0$-norm=5000  (sparse)
\includegraphics[width=\linewidth]{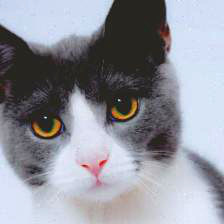}
 traffic light (80\%)
\end{minipage}
\vspace{-2mm}
\caption{\small Adversarial examples with different norm constraints formed via the projected gradient method \citep{madry2017towards} on Resnet50, along with the distance between the base image and the adversarial example, and the top class label.  
}
 \label{cat}
\end{figure}

Adversarial examples occur when a small perturbation to an image changes its class label.  There are different ways of measuring what it means for a perturbation to be ``small'';  as such, our analysis considers a range of different norms.  While the $\ell_\infty$-norm is commonly used, adversarial examples can be crafted in any $\ell_p$-norm (see Figure \ref{cat}).  We will see that the choice of norm can have a dramatic effect on the strength of theoretical guarantees for the existence of adversarial examples.   Our analysis also extends to the $\ell_0$-norm, which yields ``sparse'' adversarial examples that only perturb a small subset of image pixels (Figure \ref{cow}).


As a simple example result, consider a  classification problem with $n$-dimensional images with pixels scaled between 0 and 1 (in this case images live inside the unit hypercube).  If the image classes each occupy a fraction of the cube greater than $ \half\exp(-\pi \epsilon^2 )$, 
then images exist that are susceptible to adversarial perturbations of $\ell_2$-norm at most $\epsilon$.  Note that  $\epsilon=10$ was used in Figure \ref{cat}, and larger values are typical for larger images.

Finally, in Section \ref{experiments}, we explore the causes of adversarial susceptibility in real datasets, and the effect of dimensionality.  We present an example image class for which there is no fundamental link between dimensionality and robustness, and argue that the data distribution, and not dimensionality, is the primary cause of adversarial susceptibility. 

\begin{figure}[b]
\vspace{-3mm}
\centering
\begin{minipage}{.245\linewidth}
\centering
original
\includegraphics[width=\linewidth]{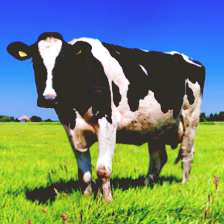}
\end{minipage}
\begin{minipage}{.245\linewidth}
\centering
$\ell_\infty$-norm    
\includegraphics[width=\linewidth]{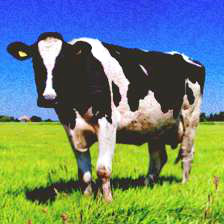}
\end{minipage}
\begin{minipage}{.245\linewidth}
\centering
\vspace{-.5px}
$\ell_0$-norm (sparse)
\includegraphics[width=\linewidth]{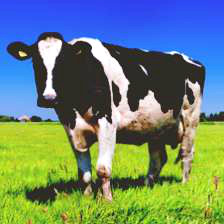}
\end{minipage}
\begin{minipage}{.245\linewidth}
\centering
\vspace{-.5px}
sparse perturbation
\includegraphics[width=\linewidth]{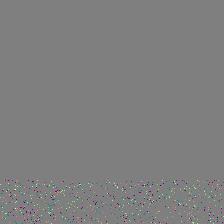}
\end{minipage}
\vspace{-2mm}
\caption{\small Sparse adversarial examples perturb a small subset of pixels and can hide adversarial ``fuzz'' inside high-frequency image regions.  The original image (left) is classified as an ``ox.''  Under $\ell_\infty$-norm perturbations, it is classified as ``traffic light'', but the perturbations visibly distort smooth regions of the image (the sky). These effects are hidden in the grass using $\ell_0$-norm (sparse) perturbations limited to a small subset of pixels. 
 }
 \label{cow}
\end{figure}

\subsection{Background:  A brief history of adversarial examples}

Adversarial examples, first demonstrated in \cite{szegedy2013intriguing} and \cite{biggio2013evasion}, change the label of an image using small and often imperceptible perturbations to its pixels. 
%
A number of defenses have been proposed to harden networks against attacks, but historically, these defenses have been quickly broken.  \textit{Adversarial training}, one of the earliest defenses, successfully thwarted the fast gradient sign method (FGSM) \citep{goodfellow2014explaining}, one of the earliest and simplest attacks.  However, adversarial training with FGSM examples was quickly shown to be vulnerable to more sophisticated multi-stage attacks \citep{kurakin2016adversarialBIM,tramer2017ensemble}. More sophisticated defenses that rely on network distillation \citep{papernot2016distillation} and specialized activation functions  \citep{zantedeschi2017efficient} were also toppled by strong attacks \citep{papernot2016transferability,tramer2017space,carlini2016defensive, carlini2017magnet}.

The ongoing vulnerability of classifiers was highlighted in recent work by \cite{athalye2018obfuscated} and \cite{athalye2017synthesizing} that broke an entire suite of defenses presented in ICLR 2018 including thermometer encoding \citep{buckman2018thermometer}, detection using local intrinsic dimensionality \citep{ma2018characterizing}, input transformations such as compression and image quilting \citep{guo2017countering}, stochastic activation pruning \citep{dhillon2018stochastic}, adding randomization at inference time \citep{xie2017mitigating}, enhancing the confidence of image labels \citep{song2017pixeldefend}, and using a generative model as a defense \citep{samangouei2018defense}.

Rather than hardening classifiers to attacks, some authors have proposed sanitizing datasets to remove adversarial perturbations before classification.  Approaches based on auto-encoders \citep{meng2017magnet} and GANs \citep{shen2017ape} were broken using optimization-based attacks \citep{carlini2017towards,carlini2017magnet}.

A number of ``certifiable'' defense mechanisms have been developed for certain classifiers. \cite{raghunathan2018certified} harden a two-layer classifier using semidefinite programming, and \cite{sinha2018certifying} propose a convex duality-based approach to adversarial training that works on sufficiently small adversarial perturbations with a quadratic adversarial loss.   \cite{kolter2017provable} consider training a robust classifier using the convex
outer adversarial polytope.
 All of these methods only consider robustness of the classifier on the training set, and robustness properties often fail to generalize reliably to test examples.

One place where researchers have enjoyed success is at training classifiers on low-dimensional datasets like MNIST \citep{madry2017towards,sinha2018certifying}. 
The robustness achieved on more complicated datasets such as CIFAR-10 and ImageNet are nowhere near that of MNIST, which leads some researchers to speculate that adversarial defense is fundamentally harder in higher dimensions -- an issue we address in Section \ref{experiments}.

This paper uses well-known results from high-dimensional geometry, specifically isoperimetric inequalities, to provide bounds on the robustness of classifiers.  Several other authors have investigated adversarial susceptibility through the lens of geometry.  \cite{wang2018analyzing} analyze the robustness of nearest neighbor classifiers, and provide a more robust NN classifier.  \cite{fawzi2018adversarial} study adversarial susceptibility of datasets under the assumption that they are produced by a generative model that maps random Gaussian vectors onto images.   \cite{gilmer2018adversarial} do a detailed case study, including empirical and theoretical results, of classifiers for a synthetic dataset that lies on two concentric spheres.  
\cite{simon2018adversarial} show that the Lipschitz constant of untrained networks with random weights gets large in high dimensions.
Shortly after the original appearance of our work, \cite{mahloujifar2018curse} presented a study of adversarial susceptibility that included both evasion and poisoning attacks.  Our work is distinct in that it studies adversarial robustness for arbitrary data distributions, and also that it rigorously looks at the effect of dimensionality on robustness limits. 

\subsection{Notation}
We use $[0,1]^n$ to denote the unit hypercube in $n$ dimensions, and $\vol(\setA)$ to denote the volume (i.e., n-dimensional Lebesgue measure) of a subset $\setA\subset [0,1]^n$. 
We use $\sphere{n}=\{x\in \reals^n | \, \|x\|_2 =1\}$ to denote the unit sphere embedded in $\reals^n,$ and $s_{n-1}$ to denote its surface area.  The size of a subset $\setA\in\sphere{n}$  can be quantified by its ($n-1$ dimensional) measure $\mu[\setA],$ which is simply the surface area the set covers.  
Because the surface area of the unit sphere varies greatly with $n$, it is much easier in practice to work with the {\em normalized measure}, which we denote $\mu_1[\setA]=\mu[\setA]/s_{n-1}$. This normalized measure has the property that $\mu_1[\sphere{n}]=1,$ and so we can interpret $\mu_1[\setA]$ as the probability of a uniform random point from the sphere lying in $\setA.$
When working with points on a sphere, we often use geodesic distance, which is always somewhat larger than (but comparable to) the Euclidean distance.   In the cube, we measure distance between points using $\ell_p$-norms, which are denoted
  \aln{
      \|z\|_p &= \left( \sum_i |z_i|^p \right)^{1/p} \text{ if } p>0 \text{, and }  \|z\|_0 =\card\{z_i \, |\, z_i \neq 0\}.  \nonumber
  }
Note that $\|\cdot\|_p$ is not truly a norm for $p<1,$ but rather a semi-norm. Such metrics are still commonly used, particularly the ``$\ell_0$-norm'' which counts the number of non-zero entries in a vector.

\section{Problem setup}
We consider the problem of classifying data points that lie in a space $\Omega$ (either a sphere or a hypercube) into $m$ different object classes.  The $m$ object classes are defined by probability density functions $\{\rho_c\}_{c=1}^m,$ where  $\rho_c:\Omega\to\reals$.  A ``random'' point from class $c$ is a random variable with density $\rho_c.$  We assume $\rho_c$ to be bounded (i.e., we don't allow delta functions or other generalized functions), and denote its upper bound by $U_c=\sup_x \rho_c(x).$  

We also consider a ``classifier''  function $\setC:\Omega\to \{1,2,\ldots,m\}$ that partitions $\Omega$ into disjoint measurable subsets, one for each class label.  The classifier we consider is discrete valued -- it provides a label for each data point but not a confidence level.

With this setup, we can give a formal definition of an adversarial example.
\begin{definition}
Consider a point $x\in \Omega$ drawn from class $c,$ a scalar $\epsilon>0,$ and a metric $d.$ We say that $x$ admits an {\em $\epsilon$-adversarial example in the metric $d$} if there exists a point $\hat x \in \Omega$ with $\setC(\hat x) \neq c,$ and $d(x,\hat x) \le \epsilon.$ 
\end{definition}
\noindent In plain words, a point has an $\epsilon$-adversarial example if we can sneak it into a different class by moving it at most $\epsilon$ units in the distance $d$.

We consider adversarial examples with respect to different $\ell_p$-norm metrics.  These metrics are written $d_p(x,\hat x) = \|x-\hat x\|_p.$   A common choice is $p=\infty,$ which limits the absolute change that can be made to any one pixel. However, $\ell_2$-norm and $\ell_1$-norm adversarial examples are also used, as it is frequently easier to create adversarial examples in these less restrictive metrics.  

We also consider {\em sparse} adversarial examples in which only a small subset of pixels are manipulated.  This corresponds to the metric $d_0,$ in which case the constraint $\|x-\hat x\|_0\le \epsilon$ means that an adversarial example was crafted by changing at most $\epsilon$ pixels, and leaving the others alone.

\section{The simple case: adversarial examples on the unit sphere} \label{sphere}

We begin by looking at the case of classifiers for data on the sphere.  While this data model may be less relevant than the other models studied below, it provides a straightforward case where results can be proven using simple, geometric lemmas.  The more realistic case of images with pixels in $[0,1]$ will be studied in Section \ref{cubeSection}.

The idea is to show that, provided a class of data points takes up enough space, nearly every point in the class lies close to the class boundary.  To show this, we begin with a simple definition.

\begin{definition}
The {\em $\epsilon$-expansion} of a subset $\setA \subset \Omega$ with respect to distance metric $d,$ denoted $\setA(\epsilon, d),$  contains all points that are at most $\epsilon$ units away from $\setA$. To be precise 
$$\setA(\epsilon, d) = \{x\in \Omega \, | \, d(x,y) \le \epsilon \text{ for some } y\in \setA \}.$$ 
We sometimes simply write $\setA(\epsilon)$ when the distance metric is clear from context.
\end{definition}

\begin{figure}[t]
\vspace{-5mm}
\centering
\includegraphics[height=4cm, trim=0cm 0cm 1cm .5cm, clip]{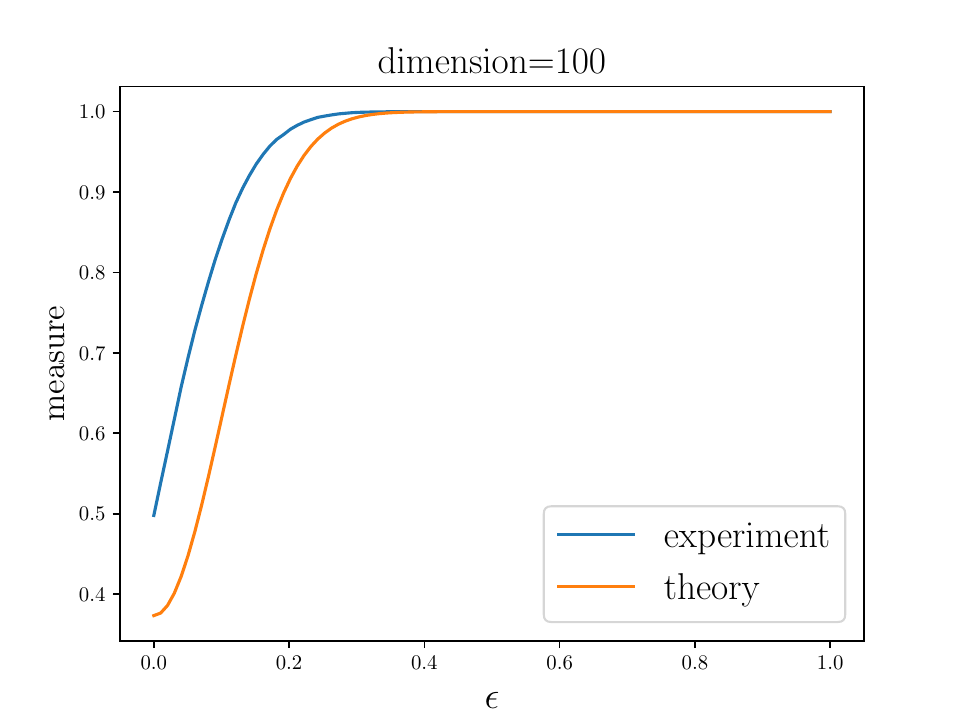}
\includegraphics[height=4cm,width=3.7cm, trim=2cm 0cm 3.6cm .5cm, clip]{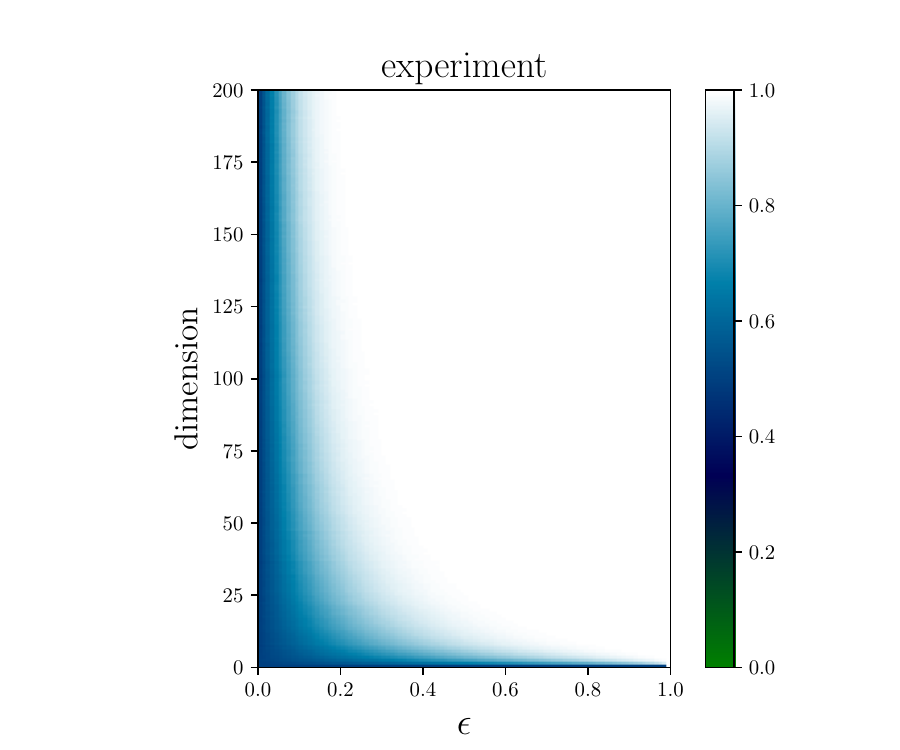}
\includegraphics[height=4cm,width=4.6cm, trim=2cm 0cm 1cm .5cm, clip]{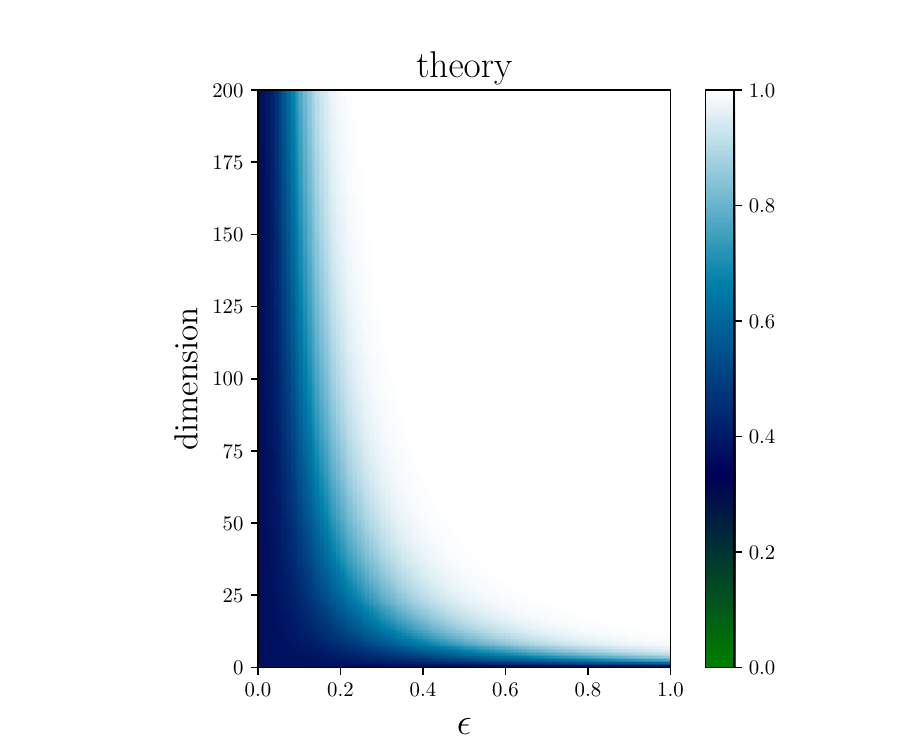}
\vspace{-4mm}
\caption{\small The $\epsilon$-expansion of a half sphere nearly covers the whole sphere for small $\epsilon$ and large $n$. Visualizations show the fraction of the sphere captured within $\epsilon$ units of a half sphere in different dimensions.  Results from a near-exact experimental method are compared to the theoretical lower bound in Lemma \ref{sphereExplode}. }
\label{sphereFig}
\end{figure}

Our result provides bounds on the probability of adversarial examples that are independent of the shape of the class boundary. This independence is a simple consequence of an {\em isoperimetric inequality}.
The classical isoperimetric inequality states that, of all closed surfaces that enclose a unit volume, the sphere has the smallest surface area.  This simple fact is intuitive but famously difficult to prove.  For a historical review of the isoperimetric inequality and its variants, see \cite{osserman1978isoperimetric}.
We will use a special variant of the isoperimetric inequality first proved by \cite{levy1951problems} and simplified by \cite{talagrand1995concentration}. 

\begin{lemma}[Isoperimetric inequality]  \label{isoSphere}
Consider a subset of the sphere $\setA \subset \sphere{n}\subset\reals^n$ with normalized measure $\mu_1(\setA)\ge 1/2$.  When using the geodesic metric,  the $\epsilon$-expansion $\setA(\epsilon)$ is at least as large as the $\epsilon$-expansion of a half sphere.  
\end{lemma}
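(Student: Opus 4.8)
The plan is to deduce the statement from the sharp spherical isoperimetric inequality: among all measurable subsets of $\sphere{n}$ with a prescribed normalized measure, a spherical cap has the smallest $\epsilon$-expansion (in the geodesic metric). Granting this, the lemma follows quickly. If $\mu_1(\setA)=1/2$, then a half sphere $\setB$ is itself a cap of measure $1/2$, so the inequality gives $\mu_1(\setA(\epsilon))\ge\mu_1(\setB(\epsilon))$ directly. If $\mu_1(\setA)>1/2$, I would choose a measurable subset $\setA'\subseteq\setA$ with $\mu_1(\setA')=1/2$; since expansion is monotone under inclusion, $\setA(\epsilon)\supseteq\setA'(\epsilon)$, and applying the equal-measure case to $\setA'$ finishes the argument. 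Thus the entire difficulty is concentrated in the equal-measure isoperimetric statement, and the ``$\ge 1/2$'' hypothesis is handled purely by monotonicity.

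To prove that caps minimize the expansion, I would use two-point symmetrization (polarization). Fix a hyperplane $P$ through the origin, let $\sigma$ be the reflection across it, and split the sphere into the two closed hemispheres $S^+$ and $S^-=\sigma(S^+)$. The polarization $\setA^{P}$ is defined pairwise on $\{x,\sigma x\}$: keep both points if both lie in $\setA$, keep neither if neither does, and otherwise keep the representative lying in $S^+$. Polarization preserves normalized measure because $\sigma$ is an isometry. The geometric engine is the fact that reflection moves points away from $S^+$: for $x,y\in S^+$ one has $d(x,y)\le d(x,\sigma y)$. From this distance monotonicity one derives the key inclusion $(\setA^{P})(\epsilon)\subseteq(\setA(\epsilon))^{P}$, and combining it with measure preservation yields $\mu_1\big((\setA^{P})(\epsilon)\big)\le\mu_1\big((\setA(\epsilon))^{P}\big)=\mu_1\big(\setA(\epsilon)\big)$. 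Hence each polarization can only shrink (or preserve) the expansion while fixing the measure of the set itself.

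The remaining step, which I expect to be the main obstacle, is to show that a suitable sequence of polarizations drives an arbitrary set to a spherical cap. Fixing a pole, the caps centered at that pole are exactly the sets left invariant by every polarization with respect to a hyperplane containing the polar axis, so the idea is to iterate polarizations over a countable dense family of such hyperplanes and pass to a limit. Making this rigorous requires a compactness argument: one shows the iterates converge (in $L^1$ of their indicator functions, after passing to a subsequence) to a limit set whose expansion measure is no larger than the original's and whose invariance forces it to be a cap. This limiting/convergence argument --- rather than any single polarization step --- is the delicate part, and it is precisely the portion carried out in the classical treatments of \cite{levy1951problems} and \cite{talagrand1995concentration} that we invoke here. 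Assembling the three pieces --- the equal-measure reduction, the single-step polarization inequality, and convergence to caps --- completes the proof.
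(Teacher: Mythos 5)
The paper offers no proof of this lemma at all: it is invoked as a classical fact, with attribution to \cite{levy1951problems} and \cite{talagrand1995concentration}, so there is no internal argument to compare yours against. Judged on its own terms, your outline is a correct and well-trodden route --- it is essentially the two-point symmetrization proof of spherical isoperimetry. The three structural pieces are all sound: the reduction of the $\mu_1(\setA)\ge 1/2$ case to the equal-measure case via monotonicity of $\epsilon$-expansions (the required equal-measure subset $\setA'\subseteq\setA$ exists because the surface measure is non-atomic); the single-step facts that polarization preserves $\mu_1$ and satisfies $(\setA^{P})(\epsilon)\subseteq(\setA(\epsilon))^{P}$, which follows from the reflection-distance inequality $d(x,y)\le d(x,\sigma y)$ for $x,y\in S^+$; and the recognition that the genuine difficulty lies in driving an arbitrary set to a cap by iterated polarizations.

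Two caveats. First, your attribution of the delicate convergence step is historically off: neither L\'evy's argument (made rigorous by Schmidt via a different symmetrization) nor Talagrand's treatment proceeds by two-point symmetrization; the polarization proof you are sketching is due to Benyamini (1984), building on the Baernstein--Taylor polarization machinery, so ``invoking'' L\'evy and Talagrand for precisely that step points to sources that do not contain it. Second, the limiting argument needs slightly more than $L^1$ convergence of indicators along a subsequence: since the expansion measure is not continuous under $L^1$ limits, one standardly proves $\mu_1(C(\epsilon'))\le\liminf_k\mu_1(\setA_k(\epsilon'))$ for every $\epsilon'<\epsilon$ and then lets $\epsilon'\uparrow\epsilon$, using that for the limiting cap $C$ the map $\epsilon\mapsto\mu_1(C(\epsilon))$ is continuous; and the iteration must be organized (e.g., via a strictly increasing energy such as $\int_{\setA}\phi(d(x,\mathrm{pole}))\,d\mu_1$ for decreasing $\phi$) so that the limit is invariant under all polarizations fixing the pole and hence a cap up to null sets. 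Neither issue breaks the proof --- both are handled in the literature you would cite --- but as written the hardest step is deferred to references where it does not live.
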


The classical isoperimetric inequality is a simple geometric statement, and frequently appears without absolute bounds on the size of the $\epsilon$-expansion of a half-sphere, or with bounds that involve unspecified constants \citep{vershynin2017high}.   A tight bound derived by \cite{milman1986asymptotic} is given below.  The asymptotic blow-up of the $\epsilon$-expansion of a half sphere predicted by this bound is shown in Figure \ref{sphereFig}.

\begin{lemma}[$\epsilon$-expansion of half sphere]  \label{sphereExplode}
The  geodesic $\epsilon$-expansion of a half sphere has normalized measure at least
  $$1- \left(\frac{\pi}{8}\right)^\half \exp\left(-\frac{n-1}{2}\epsilon^2\right).$$ 
\end{lemma}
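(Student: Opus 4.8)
The plan is to prove this by a direct computation of the measure of a spherical cap; note that no isoperimetry is needed here, since the half sphere is completely explicit. First I would identify the complement of the expansion. Writing the half sphere as $H=\{x\in\sphere{n}:x_1\ge0\}$, a point $x$ with $x_1<0$ has geodesic distance to $H$ equal to its angular depth below the equator, namely $\arcsin(-x_1)$. Hence the complement of the geodesic expansion $H(\epsilon)$ is exactly the polar cap $\{x:x_1<-\sin\epsilon\}$, and by the reflection symmetry $x_1\mapsto-x_1$ its normalized measure equals that of $\{x:x_1>\sin\epsilon\}$. So it suffices to show this cap has measure at most $(\pi/8)^{1/2}\exp(-\tfrac{n-1}{2}\epsilon^2)$.

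Next I would set up the one-dimensional integral. Parametrizing by latitude $\psi$ (so that $x_1=\sin\psi$), the surface measure disintegrates with marginal density proportional to $(\cos\psi)^{n-2}$, giving
\[ \mu_1\big[\{x:x_1>\sin\epsilon\}\big]=\frac{\int_\epsilon^{\pi/2}(\cos\psi)^{n-2}\,d\psi}{\int_{-\pi/2}^{\pi/2}(\cos\psi)^{n-2}\,d\psi}. \]
For the numerator I would use that $\sin\psi/\sin\epsilon\ge1$ on $[\epsilon,\pi/2]$ together with the antiderivative of $\sin\psi(\cos\psi)^{n-2}$, which yields the clean bound $\int_\epsilon^{\pi/2}(\cos\psi)^{n-2}\,d\psi\le(\cos\epsilon)^{n-1}/((n-1)\sin\epsilon)$; combined with the elementary inequality $\cos\epsilon\le\exp(-\epsilon^2/2)$ this already produces the desired exponential factor $\exp(-\tfrac{n-1}{2}\epsilon^2)$. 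The denominator is $2W_{n-2}$, where $W_k=\int_0^{\pi/2}(\cos\psi)^k\,d\psi$ is the Wallis integral, and I would lower-bound it using the identity $W_{n-2}W_{n-1}=\pi/(2(n-1))$ together with the monotonicity bound $W_{n-1}\le(\pi/(2(n-1)))^{1/2}$. Dividing numerator by denominator and eliminating $W_{n-2}$ collapses the $n$-dependent factors and bounds the cap by $\exp(-\tfrac{n-1}{2}\epsilon^2)\big/\big(\sin\epsilon\,\sqrt{2\pi(n-1)}\big)$.

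The main obstacle is that this estimate carries a factor $1/\sin\epsilon$: it is at most $(\pi/8)^{1/2}\exp(-\tfrac{n-1}{2}\epsilon^2)$ precisely when $\sin\epsilon\,\sqrt{2\pi(n-1)}\ge\sqrt{8/\pi}$, i.e. only for $\epsilon$ bounded below by a threshold of order $1/\sqrt{n-1}$; for smaller $\epsilon$ the bound blows up and says nothing. I would therefore split into two regimes. Above the threshold the Wallis estimate gives the stated bound directly. Below it the cap measure is trivially at most $1/2$, and I must instead verify that $1/2\le(\pi/8)^{1/2}\exp(-\tfrac{n-1}{2}\epsilon^2)$, which holds exactly when $\epsilon$ is small. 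The delicate point I expect to require the most care is checking that the two regimes overlap, uniformly in $n$ and $\epsilon$ — that the threshold at which the Wallis estimate becomes valid is reached before the trivial bound $1/2$ rises above the right-hand side. This reduces to a tight comparison of the Wallis (equivalently Gamma-function) ratios against the target constant, and is essentially the computation carried out by Milman to pin down the sharp constant $(\pi/8)^{1/2}$.
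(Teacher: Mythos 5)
The paper offers no proof of Lemma~\ref{sphereExplode}: the bound is imported verbatim from \cite{milman1986asymptotic}, so what you have written is not an alternative route but a reconstruction of the very argument behind the citation, and it is essentially correct. Each ingredient checks out: the complement of the geodesic expansion of $H=\{x_1\ge 0\}$ is indeed the cap $\{x_1<-\sin\epsilon\}$ (the geodesic distance from a point at latitude $\psi<0$ to $H$ is $|\psi|=\arcsin(-x_1)$); the latitude marginal is proportional to $(\cos\psi)^{n-2}$ since the slice at latitude $\psi$ is an $(n-2)$-sphere of radius $\cos\psi$; the numerator bound $\int_\epsilon^{\pi/2}(\cos\psi)^{n-2}d\psi\le \cos^{n-1}\epsilon/((n-1)\sin\epsilon)$ and the inequality $\cos\epsilon\le e^{-\epsilon^2/2}$ on $[0,\pi/2]$ are both valid; and the Wallis identity $W_{n-2}W_{n-1}=\pi/(2(n-1))$ with monotonicity gives $W_{n-2}\ge\sqrt{\pi/(2(n-1))}$, so the cap measure is at most $e^{-(n-1)\epsilon^2/2}/\bigl(\sin\epsilon\,\sqrt{2\pi(n-1)}\bigr)$, exactly as you claim. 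Your argument buys the reader a self-contained proof with the sharp constant $(\pi/8)^{1/2}$, which the paper merely asserts.

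The one step you flagged as delicate — the uniform overlap of the two regimes — is indeed where care is needed, and in fact it fails in one boundary case. The trivial bound $\half\le\sqrt{\pi/8}\,e^{-(n-1)\epsilon^2/2}$ holds iff $\epsilon\le\sqrt{\ln(\pi/2)/(n-1)}$ (note $\sqrt{\ln(\pi/2)}\approx 0.672$), while your Wallis estimate beats the target iff $\sin\epsilon\ge 2/(\pi\sqrt{n-1})\approx 0.6366/\sqrt{n-1}$. Writing $t=1/\sqrt{n-1}$, overlap requires $\sin(0.672\,t)\ge 0.6366\,t$, and since $\sin(ct)/t$ is decreasing in $t$ it suffices to check the largest admissible $t$. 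For $n\ge 3$ (i.e.\ $t\le 1/\sqrt{2}$) one gets $\sin(0.475)/0.707\approx 0.647 > 2/\pi\approx 0.637$, so the regimes overlap uniformly and your proof closes. But for $n=2$ ($t=1$) they do not: $\sin(0.672)\approx 0.623 < 2/\pi$, leaving the window $\epsilon\in(0.672,\,\arcsin(2/\pi)\approx 0.690)$ covered by neither regime. This is easily patched — on the circle the cap measure is exactly $\half-\epsilon/\pi$, and $\half-\epsilon/\pi\le\sqrt{\pi/8}\,e^{-\epsilon^2/2}$ holds for all $\epsilon\in[0,\pi/2]$ by a one-line calculus check (the minimum of the difference on that interval is about $0.18$) — but it must be said, since your two-regime scheme as stated does not cover it. You should also record the trivial endgame $\epsilon\ge\pi/2$, where the expansion is the whole sphere. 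With those two small additions your plan is a complete and correct proof.
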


%
Lemmas \ref{isoSphere} and \ref{sphereExplode} together can be taken to mean that, if a set is not too small, then in high dimensions almost all points on the sphere are reachable within a short $\epsilon$ jump from that set.  These lemmas have immediate implications for adversarial examples, which are formed by mapping one class into another using small perturbations.
 Despite its complex appearance, the result below is a consequence of the (relatively simple) isoperimetric inequality.
\begin{theorem}[Existence of Adversarial Examples] \label{sphereTheorem}

Consider a classification problem with $m$ object classes, each distributed over the unit sphere $\sphere{n}\subset \reals^n$ with density functions $\{\rho_c\}_{c=1}^m$.  Choose a classifier function $\setC:\sphere{n}\to \{1,2,\ldots,m\}$ that partitions the sphere into disjoint measurable subsets. Define the following scalar constants:
\vspace{-2mm}
\begin{itemize}
 \setlength\itemsep{-1mm}
\item Let $V_c$ denote the magnitude of the supremum of $\rho_c$ relative to the uniform density.  This can be written $V_c:= s_{n-1}\cdot \sup_x \rho_c(x).$ 
\item Let $f_c=\mu_1 \{x|\setC(x)=c\}$ be the fraction of the sphere labeled as $c$ by classifier $\setC$.
\end{itemize}

Choose some class $c$ with $f_c \le \half$. Sample a random data point $x$ from $\rho_c.$ Then with probability at least
\aln{\label{sphereBound}
1-V_c\left(\frac{\pi}{8}\right)^\half \exp\left(-\frac{n-1}{2}\epsilon^2\right)
}
 one of the following conditions holds:
 \vspace{-2mm}
\begin{enumerate}
 \setlength\itemsep{-1mm}
\item $x$ is misclassified by $\setC,$ or
\item $x$ admits an $\epsilon$-adversarial example in the geodesic distance. 
\end{enumerate}

\end{theorem}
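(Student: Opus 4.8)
The plan is to reduce the theorem to a direct application of Lemmas \ref{isoSphere} and \ref{sphereExplode} by identifying exactly the set of points that violate \emph{both} stated conditions. First I would set $\setR = \{x \in \sphere{n} : \setC(x) = c\}$, the region the classifier assigns to class $c$, so that $\mu_1(\setR) = f_c \le \half$, and let $\setS = \sphere{n} \setminus \setR = \{x : \setC(x) \neq c\}$, which therefore has $\mu_1(\setS) \ge \half$. The key observation is that a sampled point $x$ fails condition 1 and condition 2 simultaneously precisely when $x$ is correctly labelled, i.e. $x \in \setR$, \emph{and} no point within geodesic distance $\epsilon$ of $x$ carries a label different from $c$. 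By the definition of the $\epsilon$-expansion, the latter says exactly that $x \notin \setS(\epsilon)$, since $x$ admits an $\epsilon$-adversarial example if and only if some element of $\setS$ lies within $\epsilon$ of $x$. Hence the ``bad set'' of points failing both conditions is $\setB := \setR \cap (\sphere{n} \setminus \setS(\epsilon))$, which is in particular contained in $\sphere{n} \setminus \setS(\epsilon)$, the complement of the $\epsilon$-expansion of $\setS$.

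Next I would bound the normalized measure of $\setB$. Because $\mu_1(\setS) \ge \half$, Lemma \ref{isoSphere} guarantees that $\setS(\epsilon)$ is at least as large as the $\epsilon$-expansion of a half sphere, and Lemma \ref{sphereExplode} then yields $\mu_1(\setS(\epsilon)) \ge 1 - \left(\frac{\pi}{8}\right)^{\half}\exp\left(-\frac{n-1}{2}\epsilon^2\right)$. Passing to complements gives $\mu_1(\setB) \le \mu_1(\sphere{n} \setminus \setS(\epsilon)) \le \left(\frac{\pi}{8}\right)^{\half}\exp\left(-\frac{n-1}{2}\epsilon^2\right)$.

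The final step converts this uniform-measure bound into a probability under the class-conditional density $\rho_c$. Using $\rho_c(x) \le \sup_x \rho_c(x) = V_c / s_{n-1}$ pointwise, the probability that a sample $x \sim \rho_c$ lands in $\setB$ is $\int_{\setB} \rho_c \, d\mu \le (V_c/s_{n-1})\, \mu(\setB) = V_c\, \mu_1(\setB) \le V_c\left(\frac{\pi}{8}\right)^{\half}\exp\left(-\frac{n-1}{2}\epsilon^2\right)$, where I used $\mu(\setB) = s_{n-1}\mu_1(\setB)$. Since every $x \notin \setB$ satisfies at least one of the two conditions, subtracting this probability from $1$ produces exactly the lower bound claimed in \eqref{sphereBound}.

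I expect the difficulty here to be organizational rather than analytical: the one thing that must be gotten right is the direction of the isoperimetric inequality, which has to be applied to the \emph{large} set $\setS$ (with measure at least $\half$) and not to $\setR$, so that the correctly-classified-but-safe region is squeezed against the complement of a nearly-full expansion. The only genuine estimate is the density-to-measure conversion in the last step, where the constant $V_c = s_{n-1}\sup_x \rho_c(x)$ enters because the worst case concentrates all of the bad set's probability mass at the supremum of $\rho_c$; this is exactly the role of the $V_c$ factor in the bound, and also why the guarantee degrades for sharply peaked class densities.
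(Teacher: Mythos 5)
Your proposal is correct and follows essentially the same route as the paper's own proof: both apply Lemma \ref{isoSphere} and Lemma \ref{sphereExplode} to the complement $\setS$ of the class-$c$ region, identify the ``safe'' set with the complement of $\setS(\epsilon)$, and convert the uniform-measure bound to a probability via $\rho_c \le V_c/s_{n-1}$. The only cosmetic difference is that you bound the bad set by an inclusion ($\setB \subseteq \sphere{n}\setminus\setS(\epsilon)$) where the paper observes the two sets actually coincide; the estimate is unaffected.
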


\begin{proof}
Choose a class $c$ with $f_c \le \half.$
Let $\setR=\{x|\setC(x)=c\}$ denote the region of the sphere labeled as class $c$ by $\setC$, and let $\overline \setR$ be its complement. $\overline \setR(\epsilon)$ is the $\epsilon$-expansion of $\overline \setR$ in the geodesic metric. Because $\overline \setR$ covers at least half the sphere, the isoperimetric inequality (Lemma \ref{isoSphere}) tells us that the epsilon expansion is at least as great as the epsilon expansion of a half sphere.  We thus have
  $$\mu_1[\overline R(\epsilon)] \ge 1- \left(\frac{\pi}{8}\right)^\half \exp\left(-\frac{n-1}{2}\epsilon^2\right).$$

Now, consider the set $\setS_c$ of ``safe'' points from class $c$ that are correctly classified and do not admit adversarial perturbations.  A point is correctly classified only if it lies inside $\setR,$ and therefore outside of $\overline \setR$. To be safe from adversarial perturbations, a point cannot lie within $\epsilon$ distance from the class boundary, and so it cannot lie within $\overline R(\epsilon)$.  It is clear that the set $\setS_c$ of safe points is exactly the complement of $\overline R(\epsilon).$  This set has normalized measure
  $$\mu_1[\setS_c] \le  \left(\frac{\pi}{8}\right)^\half \exp\left(-\frac{n-1}{2}\epsilon^2\right).$$
 
 The probability of a random point lying in $\setS_c$ is bounded above by the normalized supremum of $\rho_c$ times the normalized measure  $\mu_1[\setS_c].$ This product is given by
  $$V_c \left(\frac{\pi}{8}\right)^\half \exp\left(-\frac{n-1}{2}\epsilon^2\right).$$
  We then subtract this probability from 1 to obtain the probability of a point lying outside the safe region, and arrive at  \eqref{sphereBound}.  
  \end{proof}

In the above result, we measure the size of adversarial perturbations using the geodesic distance.  Most studies of adversarial examples measure the size of perturbation in either the $\ell_2$ (Euclidean) norm or the $\ell_\infty$ (max) norm, and so it is natural to wonder whether Theorem \ref{sphereTheorem} depends strongly on the distance metric.  Fortunately (or, rather unfortunately) it does not.

It is easily observed that, for any two points $x$ and $y$ on a sphere, 
  $$   d_\infty(x,y) \le d_2(x,y) \le d_g(x,y),    $$
where  $d_\infty(x,y)$, $d_2(x,y)$, and $d_g(x,y)$ denote the $l_\infty$, Euclidean, and geodesic distance, respectively.  From this, we see that Theorem \ref{sphereTheorem} is actually fairly conservative;  any $\epsilon$-adversarial example in the geodesic metric would also be adversarial in the other two metrics, and the bound in Theorem \ref{sphereTheorem} holds regardless of which of the three metrics we choose (although different values of $\epsilon$ will be appropriate depending on the norm). 


\section{What about the unit cube?} \label{cubeSection}
The above result about the sphere is simple and easy to prove using classical results.  However, real world images do not lie on the sphere.  In a more typical situation, images will be scaled so that their pixels lie in $[0,1]$, and data lies inside a high-dimensional hypercube  (but, unlike the sphere, data is not confined to its surface).  The proof of Theorem \ref{sphereTheorem} makes extensive use of properties that are exclusive to the sphere, and is not applicable to this more realistic setting.  Are there still problem classes on the cube where adversarial examples are inevitable? 
 
This question is complicated by the fact that {\em geometric} isoperimetric inequalities do not exist for the cube, as the shapes that achieve minimal $\epsilon$-expansion (if they exist) depend on the volume they enclose and the choice of $\epsilon$ \citep{ros2001isoperimetric}. 
 Fortunately, researchers have been able to derive ``algebraic'' isoperimetric inequalities that provide lower bounds on the size of the $\epsilon$-expansion of sets without identifying the shape that achieves this minimum \citep{talagrand1996new,milman1986asymptotic}.  The result below about the unit cube is analogous to Proposition 2.8 in \cite{ledoux2001concentration}, except with tighter constants.  For completeness, a proof (which utilizes methods from Ledoux) is provided in Appendix~\ref{cubeIsoAppend}.
 
\begin{lemma}[Isoperimetric inequality on a cube] \label{cubeiso}
Consider a measurable subset of the cube $\setA \subset [0,1]^n,$ and a p-norm distance metric $d_p(x,y)=\|x-y\|_p$ for $p>0.$ Let $ \Phi(z)=(2\pi)^{-\half}\int_{-\infty}^z e^{-t^2/2} dt,$ and let $\alpha$ be the scalar that satisfies $\Phi(\alpha) =\vol[\setA].$ Then
\aln{ \label{tightPhiBound}
\vol[\setA(\epsilon,d_p)]  \ge \Phi\left( \alpha+\frac{\sqrt{2\pi n}}{n^{1/p^*}}\epsilon\right)
}
where $p^* =\min(p,2).$ 
In particular, if $\vol(\setA)\ge 1/2,$ then we simply have 
\aln{\label{cleanCubeBound}
\vol[\setA(\epsilon,d_p)]  \ge 1- \frac{\exp(-2\pi n^{1-2/p^* } \epsilon^2  )}{2\pi \epsilon n^{1/2-1/p^*} }.
}
\end{lemma}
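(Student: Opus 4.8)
The plan is to deduce the cube inequality from the classical Gaussian isoperimetric inequality by transporting the Gaussian measure onto the cube. Let $\gamma_n$ denote the standard Gaussian measure on $\reals^n$, and recall the Gaussian isoperimetric inequality (with half-spaces as extremizers): if $B\subset\reals^n$ has $\gamma_n(B)=\Phi(\alpha)$, then its Euclidean $\delta$-expansion satisfies $\gamma_n[B(\delta,d_2)]\ge\Phi(\alpha+\delta)$. The bridge to the cube is the coordinatewise map $F(x)=(\Phi(x_1),\dots,\Phi(x_n))$, which sends $\reals^n$ onto $[0,1]^n$ and pushes $\gamma_n$ forward to Lebesgue measure, since $\Phi$ pushes a one-dimensional standard Gaussian to the uniform law on $[0,1]$ (the probability integral transform). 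Consequently $\vol[\setS]=\gamma_n[F^{-1}(\setS)]$ for every measurable $\setS\subset[0,1]^n$; in particular, with $B=F^{-1}(\setA)$ we get $\gamma_n[B]=\vol[\setA]=\Phi(\alpha)$, fixing the same $\alpha$ that appears in the statement.

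First I would record the two quantitative facts that control how $F$ distorts distances. Since $\Phi'(z)=(2\pi)^{-1/2}e^{-z^2/2}\le(2\pi)^{-1/2}$, the map $\Phi$ is $(2\pi)^{-1/2}$-Lipschitz, so $|\Phi(x_i)-\Phi(y_i)|\le(2\pi)^{-1/2}|x_i-y_i|$ coordinatewise, and hence $\|F(x)-F(y)\|_p\le(2\pi)^{-1/2}\|x-y\|_p$ for every $p>0$ (this step only needs monotonicity of $t\mapsto t^{1/p}$, so it survives the $p<1$ semi-norm case). Next I would compare $\ell_p$ with $\ell_2$ on $\reals^n$: for $p\ge2$ one has $\|z\|_p\le\|z\|_2$, while for $0<p<2$ the power-mean inequality gives $\|z\|_p\le n^{1/p-1/2}\|z\|_2$; both cases read $\|z\|_p\le n^{1/p^*-1/2}\|z\|_2$ with $p^*=\min(p,2)$. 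Combining, $\|x-y\|_2\le\delta$ implies $\|F(x)-F(y)\|_p\le(2\pi)^{-1/2}n^{1/p^*-1/2}\delta$. Setting $\epsilon=(2\pi)^{-1/2}n^{1/p^*-1/2}\delta$, equivalently $\delta=\sqrt{2\pi}\,n^{1/2-1/p^*}\epsilon=\tfrac{\sqrt{2\pi n}}{n^{1/p^*}}\epsilon$, this shows $F\big(B(\delta,d_2)\big)\subset\setA(\epsilon,d_p)$, hence $B(\delta,d_2)\subset F^{-1}[\setA(\epsilon,d_p)]$. Taking $\gamma_n$-measure and using the pushforward identity together with Gaussian isoperimetry yields
\[
\vol[\setA(\epsilon,d_p)]=\gamma_n\!\big[F^{-1}[\setA(\epsilon,d_p)]\big]\ge\gamma_n[B(\delta,d_2)]\ge\Phi(\alpha+\delta)=\Phi\!\left(\alpha+\tfrac{\sqrt{2\pi n}}{n^{1/p^*}}\epsilon\right),
\]
which is \eqref{tightPhiBound}.

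For the simplified bound \eqref{cleanCubeBound}, I would specialize to $\vol(\setA)\ge1/2$, which forces $\alpha\ge0$; monotonicity of $\Phi$ then gives $\vol[\setA(\epsilon,d_p)]\ge\Phi(t)$ with $t=\tfrac{\sqrt{2\pi n}}{n^{1/p^*}}\epsilon$, so it remains only to bound the Gaussian tail $1-\Phi(t)$. Using the sharp estimate $1-\Phi(t)\le(t\sqrt{2\pi})^{-1}e^{-t^2/2}$ and substituting $t=\sqrt{2\pi}\,n^{1/2-1/p^*}\epsilon$, so that $t^2/2=\pi n^{1-2/p^*}\epsilon^2$, reproduces the exponential factor $e^{-\pi n^{1-2/p^*}\epsilon^2}$ and the stated power of $n$ in the prefactor, giving \eqref{cleanCubeBound}.

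The main obstacle is the constant bookkeeping in the second paragraph: one must check that the single scalar $\sqrt{2\pi n}/n^{1/p^*}$ simultaneously absorbs the $(2\pi)^{-1/2}$ Lipschitz factor of $\Phi$ and the dimension-dependent $\ell_p$–$\ell_2$ comparison across both regimes $p\ge2$ and $0<p<2$, and then that the tail estimate in the last step delivers exactly the advertised prefactor (the precise power of $n$ follows from $t=\sqrt{2\pi}\,n^{1/2-1/p^*}\epsilon$, while the numerical constant must be tracked carefully). The underlying Gaussian isoperimetric inequality I would take as known; the tighter constants claimed relative to \cite{ledoux2001concentration} come precisely from using the sharp half-space extremizer and the sharp Lipschitz bound on $\Phi$ in place of cruder concentration estimates.
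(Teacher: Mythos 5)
Your proposal is correct and follows essentially the same route as the paper's own proof in Appendix A: push the Gaussian isoperimetric inequality onto the cube via the coordinatewise map $\Phi$, combine the $(2\pi)^{-1/2}$ Lipschitz bound on $\Phi$ with the comparison $\|u\|_p \le n^{1/p^*-1/2}\|u\|_2$ to produce exactly the factor $\sqrt{2\pi n}/n^{1/p^*}$, and finish with the standard Gaussian tail estimate. One caveat you share with the paper: the tail bound $1-\Phi(t)\le (t\sqrt{2\pi})^{-1}e^{-t^2/2}$ actually leaves an extra factor of $1/\epsilon$ in the prefactor, so the clean form \eqref{cleanCubeBound} as stated only follows when $\epsilon\ge 1$ (otherwise the derived bound is slightly weaker) --- the paper's proof elides this in exactly the same way.
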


 Using this result, we can show that most data samples in a cube admit adversarial examples, provided the data distribution is not excessively concentrated.
\begin{theorem}[Adversarial examples on the cube] \label{cubeTheorem}

Consider a classification problem with $m$ classes, each distributed over the unit hypercube $[0,1]^n$ with density functions $\{\rho_c\}_{c=1}^m$.  Choose a classifier function ${\setC:[0,1]^{n}\to \{1,2,\ldots,m\}}$ that partitions the hypercube into disjoint measurable subsets. Define the following scalar constants:
\vspace{-2mm}
\begin{itemize}
 \setlength\itemsep{-1mm}
\item Let $U_c$ denote the supremum of $\rho_c$.  
\item Let $f_c$ be the fraction of hypercube partitioned into class $c$ by $\setC$.
\end{itemize}

Choose some class $c$ with $f_c \le \half,$ and select an $\ell_p$-norm  with $p>0$.  Define  $p^*=\min(p,2).$ Sample a random data point $x$ from the class distribution $\rho_c.$
 Then with probability at least
 \aln{\label{cubebound}
1-U_c \frac{\exp(-2\pi n^{1-2/p^* } \epsilon^2  )}{2\pi \epsilon n^{1/2-1/p^*} }.
}
 one of the following conditions holds:
 \vspace{-2mm}
\begin{enumerate}
 \setlength\itemsep{-1mm}
\item $x$ is misclassified by $\setC,$ or
\item $x$  has an adversarial example $\hat x,$ with $\|x-\hat x\|_p \le \epsilon$. 
\end{enumerate}

\end{theorem}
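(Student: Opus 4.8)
The plan is to mirror the proof of Theorem~\ref{sphereTheorem} almost verbatim, replacing the sphere isoperimetric inequality with its cube analogue, Lemma~\ref{cubeiso}. First I would fix a class $c$ with $f_c \le \half$ and set $\setR=\{x\mid \setC(x)=c\}$, the region the classifier labels $c$, with complement $\overline \setR$. Since the cube has unit volume, the label fraction $f_c$ coincides with $\vol[\setR]$, so $\vol[\overline \setR]=1-f_c\ge \half$. This places us squarely in the regime of the clean bound \eqref{cleanCubeBound}: applying Lemma~\ref{cubeiso} to $\setA=\overline \setR$ in the chosen $\ell_p$-metric gives
\aln{
\vol[\overline \setR(\epsilon,d_p)] \ge 1-\frac{\exp(-\pi n^{1-2/p^*}\epsilon^2)}{2\pi n^{1/2-1/p^*}}.
}

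Next I would identify the ``safe'' set $\setS_c$ of points from class $c$ that are both correctly classified and free of $\epsilon$-adversarial examples, and argue that it is exactly the complement of $\overline \setR(\epsilon,d_p)$. The key observation is that a point admits an $\epsilon$-adversarial example precisely when some point of a different class lies within distance $\epsilon$ of it, i.e. when it lies within $\epsilon$ of $\overline \setR$; together with the fact that misclassified points already lie in $\overline \setR$ itself, this shows that the unsafe points are exactly $\overline \setR(\epsilon,d_p)$. Moreover $x\notin \overline \setR(\epsilon,d_p)$ forces $x\in\setR$ (take distance zero), so correct classification comes for free. The displayed bound then yields $\vol[\setS_c]\le \frac{\exp(-\pi n^{1-2/p^*}\epsilon^2)}{2\pi n^{1/2-1/p^*}}$.

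Finally I would pass from volume to probability. Because $\rho_c$ is bounded above by $U_c$, the probability that a draw from $\rho_c$ lands in the safe set satisfies $\int_{\setS_c}\rho_c(x)\,dx \le U_c\,\vol[\setS_c]$. Subtracting from $1$ gives the claimed lower bound \eqref{cubebound} on the probability that $x$ is \emph{not} safe, that is, either misclassified or in possession of an $\epsilon$-adversarial example.

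Unlike the sphere case, there is no genuine obstacle remaining once Lemma~\ref{cubeiso} is in hand: all of the geometric difficulty has been pushed into that algebraic isoperimetric estimate (proved in the appendix), and what is left is a clean reduction. The only points demanding care are bookkeeping ones. First, on the unit cube volumes and label fractions coincide, so no surface-area normalization appears. Second, the supremum $U_c$ enters directly rather than ``relative to uniform'' as the quantity $V_c$ did on the sphere, simply because the uniform density on $[0,1]^n$ is identically $1$. I would also note in passing that, as on the sphere, the result degrades across norms only through the single exponent $p^*=\min(p,2)$, which is already built into Lemma~\ref{cubeiso}.
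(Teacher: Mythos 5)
Your proposal is correct and follows essentially the same route as the paper's own proof: both take the complement $\overline{\setR}$ of the class-$c$ region, apply the clean bound of Lemma~\ref{cubeiso} to its $\epsilon$-expansion, identify the safe set as the complement of $\overline{\setR}(\epsilon,d_p)$, and convert the resulting volume bound into a probability bound via the density cap $U_c$. The only difference is that you make explicit some bookkeeping the paper leaves implicit (that volume equals label fraction on the unit cube, and why the unsafe points are exactly $\overline{\setR}(\epsilon,d_p)$).
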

 
 When adversarial examples are defined in the $\ell_2$-norm (or for any $p\ge2$), the bound in \eqref{cubebound} becomes
 \aln{ \label{case2}
1-U_c \exp(-2\pi  \epsilon^2  )/(2\pi \epsilon).
}

Provided the class distribution is not overly concentrated, equation \ref{case2} guarantees adversarial examples with relatively ``small'' $\epsilon$ relative to a typical vector.   In $n$ dimensions, the $\ell_2$ diameter of the cube is $\sqrt{n},$ and so it is reasonable to choose $\epsilon=O(\sqrt{n})$ in \eqref{case2}. In Figure \ref{cat}, we chose $\epsilon=10.$    A similarly strong bound of $1-U_c\sqrt{n} \exp(-2\pi  \epsilon^2/n  )/(2\pi \epsilon)$ holds for the case of the $\ell_1$-norm, in which case the diameter is $n.$

Oddly, \eqref{cubebound} seems particularly weak when the $\ell_\infty$ norm is used.  In this case, the bound on the right side of \eqref{cubebound} becomes \eqref{case2} just like in the $\ell_2$ case.  However, $\ell_\infty$ adversarial examples are only interesting if we take $\epsilon<1$, in which case  \eqref{case2} becomes vacuous for large $n$.  This bound can be tightened up in certain situations.  If we prove Theorem \ref{cubeTheorem} using the tighter (but messier) bound of \eqref{tightPhiBound} instead of \eqref{cleanCubeBound},  we can replace \eqref{cubebound} with
 $$ 1-U_c\hat \Phi\left( \alpha+\sqrt{2\pi }\epsilon\right) $$
for $p\ge 2,$ where $\hat \Phi(z) = \frac{1}{\sqrt{2\pi}}\int_z^{\infty} e^{-t^2/2} dt \ge \frac{1}{\sqrt{2\pi}z} e^{-z^2/2}$ (for $z>0$), and $\alpha=\Phi^{-1}(1-f_c)$.
For this bound to be meaningful with $\epsilon<1$, we need $f_c$ to be relatively small, and $\epsilon$ to be roughly $f_c$ or smaller.  This is realistic for some problems; ImageNet has 1000 classes, and so $f_c<10^{-3}$ for at least one class.

Interestingly, under  $\ell_\infty$-norm attacks,  guarantees of adversarial examples are much stronger on the sphere (Section \ref{sphere}) than on the cube.
One might wonder whether the weakness of Theorem \ref{cubebound} in the $\ell_\infty$ case is fundamental, or if this is a failure of our approach.  One can construct examples of sets with $\ell_\infty$ expansions that nearly match the behavior of \eqref{case2}, and so our theorems in this case are actually quite tight.   It seems to be inherently more difficult to prove the existence of adversarial examples in the cube using the $\ell_\infty$-norm.
 
\section{What about sparse adversarial examples?}

A number of papers have looked at {\em sparse} adversarial examples,  in which a small number of image pixels, in some cases only one \citep{su2017one}, are changed to manipulate the class label. 
    To study this case, we would like to investigate adversarial examples under the $\ell_0$ metric.  The $\ell_0$ distance is defined as
      $$ d(x,y) = \|x-y\|_0 = \text{card}\{i\,|\, x_i \neq y_i\} .$$
    If a point $x$ has an $\epsilon$-adversarial example in this norm, then it can be perturbed into a different class by modifying at most $\epsilon$ pixels (in this case $\epsilon$ is taken to be a positive integer). 
    
    Theorem \ref{cubeTheorem}  is fairly tight for $p=1$ or $2$. However, the bound becomes quite loose for small $p,$ and in particular it fails completely for the important case of $p=0.$ For this reason, we present a different bound that is considerably tighter for small $p$ (although slightly looser for large $p$). 

 The case $p=0$ was studied by \cite{milman1986asymptotic} (Section 6.2) and \cite{mcdiarmid1989method}, and later by  \cite{talagrand1995concentration,talagrand1996new}.  The proof of the following theorem (appendix \ref{cubeProof}) follows the method used in Section 5 of \cite{talagrand1996new}, with modifications made to extend the proof to arbitrary $p$.  


\begin{lemma}[Isoperimetric inequality on the cube: small $p$] \label{smallIso}
Consider a measurable subset of the cube $\setA \subset [0,1]^n,$ and a p-norm distance metric $d(x,y)=\|x-y\|_p$ for any $p \ge 0.$ We have
  \aln{
    \vol[\setA(\epsilon, d_p)] &\ge  1-\frac{\exp\left(  -\epsilon^{2p}/n  \right)}{\vol[\setA]} \text{, for } p>0 \text{ and}\\
    \vol[\setA(\epsilon, d_0)] &\ge 1-\frac{\exp\left( - \epsilon^2/n  \right)}{\vol[\setA]}  \text{, for } p=0.
 }

\end{lemma}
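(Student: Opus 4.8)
The plan is to deduce the volume bound from a one-sided deviation estimate for the distance-to-$\setA$ function, obtained via a bound on its exponential moment together with Markov's inequality. The structural observation that lets a single argument cover every $p\ge 0$ is that the relevant quantity is \emph{additive across coordinates with each coordinate contributing at most $1$}: for $p>0$ set $h(x):= d_p(x,\setA)^p = \min_{y\in\setA}\sum_{i=1}^n |x_i-y_i|^p$, and for $p=0$ set $h(x):= d_0(x,\setA)=\min_{y\in\setA}\sum_{i=1}^n \1[x_i\neq y_i]$. Since $x_i,y_i\in[0,1]$ we have $|x_i-y_i|^p\le 1$, so in both cases $h$ is a minimum over $\setA$ of a sum of $n$ per-coordinate costs lying in $[0,1]$. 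Because $t\mapsto t^{1/p}$ is increasing, $\{x: d_p(x,\setA)>\epsilon\}=\{x:h(x)>\epsilon^p\}$ for $p>0$ and $\{x:d_0(x,\setA)>\epsilon\}=\{x:h(x)>\epsilon\}$, and the complement of either event is exactly $\setA(\epsilon,d_p)$.

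The engine is the Laplace-transform bound
\aln{\label{planMoment}
\int_{[0,1]^n} e^{t\,h(x)}\,dx \;\le\; \frac{1}{\vol[\setA]}\,e^{n t^2/4}, \qquad t>0,
}
which I would establish for the generic additive unit-bounded cost $h$ above. Granting \eqref{planMoment}, Markov's inequality gives $\vol[\{h>s\}]\le e^{-ts}e^{nt^2/4}/\vol[\setA]$, and optimizing at $t=2s/n$ yields $\vol[\{h>s\}]\le e^{-s^2/n}/\vol[\setA]$. Taking $s=\epsilon^p$ (resp.\ $s=\epsilon$) and passing to complements produces exactly the two claimed inequalities. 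Thus the whole content of the lemma is concentrated in \eqref{planMoment}, and the passage to arbitrary $p$ is free once \eqref{planMoment} is known for additive costs with summands in $[0,1]$.

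To prove \eqref{planMoment} I would induct on the dimension $n$, following Talagrand. Split off the last coordinate, writing $x=(w,z)$ with $w\in[0,1]^{n-1}$ and $z\in[0,1]$; let $\setA_{z}=\{w:(w,z)\in\setA\}$ be the slice and $\setB=\{w:\exists z,\,(w,z)\in\setA\}$ the projection. Two ways of reaching $\setA$ give the elementary bounds $h(w,z)\le h_{\setA_{z}}(w)$ (match the last coordinate, pay the $(n-1)$-dimensional cost to the slice) and $h(w,z)\le h_{\setB}(w)+1$ (match the first $n-1$ coordinates to the projection, pay at most $1$ in the last). Combining these through a convexity/H\"older interpolation in a parameter $\theta\in[0,1]$ and invoking the inductive hypothesis on the $(n-1)$-dimensional sets $\setA_{z}$ and $\setB$, the inner integral over $w$ is controlled by $e^{(n-1)t^2/4}/\vol[\setB]$ times a function of the ratio $r(z):=\vol[\setA_{z}]/\vol[\setB]\in(0,1]$. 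Integrating over $z$ and using $\int_0^1 r(z)\,dz=\vol[\setA]/\vol[\setB]$, the induction closes provided the induced one-dimensional inequality in $r$ holds with the stated constant; the base case $n=1$ is an elementary convexity check on $\vol[\setA]^2+\vol[\setA](1-\vol[\setA])e^t\le e^{t^2/4}$.

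That one-dimensional recursion is the crux and the main obstacle: it is where the exponent $nt^2/4$ (hence the final $e^{-\epsilon^{2p}/n}$) is pinned down, and establishing it with the sharp constant is exactly the technical core of Talagrand's Section~5 argument. The only genuinely new ingredient relative to the Hamming ($p=0$) case is checking that for every $p>0$ the per-coordinate cost $|x_i-y_i|^p$ still lies in $[0,1]$ and that the slice/projection bounds on $h$ hold verbatim, both immediate, so that the identical recursion applies with $\epsilon$ replaced by $\epsilon^p$. I would expect the delicate points to be (i) identifying and proving the correct one-dimensional inequality with the right constant, and (ii) handling measurability of the slices $\setA_{z}$ and the projection $\setB$ together with the degenerate case $\vol[\setA_{z}]=0$ in the inductive step.
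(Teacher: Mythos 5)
Your proposal is correct and follows essentially the same route as the paper's Appendix~\ref{cubeProof} proof: Talagrand's induction on dimension for the exponential moment of $d_p^p$ (slice versus projection, with per-coordinate cost in $[0,1]$), followed by Markov's inequality and optimization at $t = 2\epsilon^p/n$. The one-dimensional inequality you flag as the crux is precisely the paper's Lemma~\ref{talagrand}, which it likewise imports from Talagrand without proof; the only cosmetic difference is that you fix $\alpha=1$ from the start, whereas the paper carries a free parameter $\alpha$ (set to $1$ at the end) so as to also extract the sharper bound \eqref{superTightBound} used later for Theorem~\ref{existTheorem}.
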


Using this result, we can prove a statement analogous to Theorem \ref{cubeTheorem}, but for sparse adversarial examples.
We present only the case of $p=0,$ but the generalization to the case of other small $p$ using Lemma \ref{smallIso} is straightforward.

\begin{theorem}[Sparse adversarial examples] \label{sparseTheorem}

Consider the problem setup of Theorem \ref{cubeTheorem}.
Choose some class $c$ with $f_c \le \half$, and sample a random data point $x$ from the class distribution $\rho_c.$ Then with probability at least
\aln{\label{sparsebound}
1-2U_c\exp(-\epsilon^2/n)
}
 one of the following conditions holds:
 \vspace{-2mm}
\begin{enumerate}
 \setlength\itemsep{-1mm}
\item $x$ is misclassified by $\setC,$ or
\item $x$  can be adversarially perturbed by modifying at most $\epsilon$ pixels, while still remaining in the unit hypercube. 
\end{enumerate}
\end{theorem}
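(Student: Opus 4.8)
The plan is to mirror the proof of Theorem~\ref{sphereTheorem} almost verbatim, replacing the sphere's isoperimetric inequality with the $p=0$ case of Lemma~\ref{smallIso}. First I would fix a class $c$ with $f_c\le\half$, let $\setR=\{x\,|\,\setC(x)=c\}$ be the region labeled $c$, and let $\overline\setR$ be its complement. Since $f_c\le\half$, the complement satisfies $\vol[\overline\setR]\ge\half$.

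Next I would apply Lemma~\ref{smallIso} with the measurable set $\setA=\overline\setR$ and $p=0$, which gives
\[
\vol[\overline\setR(\epsilon,d_0)] \ge 1-\frac{\exp(-\epsilon^2/n)}{\vol[\overline\setR]}.
\]
Because $\vol[\overline\setR]\ge\half$, the reciprocal $1/\vol[\overline\setR]$ is at most $2$, and this is precisely where the factor of $2$ in \eqref{sparsebound} originates; substituting yields $\vol[\overline\setR(\epsilon,d_0)]\ge 1-2\exp(-\epsilon^2/n)$.

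I would then introduce the ``safe'' set $\setS_c$ of class-$c$ points that are both correctly classified and free of $\epsilon$-sparse adversarial examples. A correctly classified point must lie in $\setR$, hence outside $\overline\setR$; and to avoid an adversarial example it cannot sit within $d_0$-distance $\epsilon$ of the boundary, i.e.\ it must lie outside $\overline\setR(\epsilon,d_0)$. As in Theorem~\ref{sphereTheorem}, the safe set is therefore exactly the complement of $\overline\setR(\epsilon,d_0)$, so $\vol[\setS_c]\le 2\exp(-\epsilon^2/n)$. The probability that a sample from $\rho_c$ lands in $\setS_c$ is at most $U_c\,\vol[\setS_c]\le 2U_c\exp(-\epsilon^2/n)$, and subtracting this from $1$ delivers \eqref{sparsebound}.

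The probabilistic bookkeeping is routine once Lemma~\ref{smallIso} is in hand, so I do not anticipate a serious obstacle; the whole mathematical content sits inside that lemma. The one point that deserves explicit care is the clause ``while still remaining in the unit hypercube'' in condition~2. This is automatically enforced by the definition of the $\epsilon$-expansion, which is taken as a subset of $\Omega=[0,1]^n$: any witness $\hat x$ placing a point inside $\overline\setR(\epsilon,d_0)$ already lies in the cube, and the $d_0$ metric guarantees that $\hat x$ differs from $x$ in at most $\epsilon$ coordinates. I would therefore note this explicitly when identifying $\setS_c$ with the complement of the expansion, so that condition~2 reads as a genuine sparse, in-cube perturbation rather than an abstract membership statement.
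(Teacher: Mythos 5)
Your proposal is correct and takes essentially the same route as the paper, which proves Theorems \ref{cubeTheorem} and \ref{sparseTheorem} together by applying the isoperimetric bound to the complement $\overline\setR$ of the class-$c$ region and then bounding the probability of the safe set by $U_c$ times its volume. If anything, your bookkeeping is tidier: the paper's combined proof has a small typographical slip (it absorbs $U_c$ into $\delta$ and then multiplies by $U_c$ again), whereas you correctly trace the factor of $2$ in \eqref{sparsebound} to $1/\vol[\overline\setR]\le 2$ from Lemma \ref{smallIso} with $\vol[\overline\setR]\ge\half$.
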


\section{What if we just show that adversarial examples exist?} \label{exist}
Tighter bounds can be obtained if we only guarantee that adversarial examples exist for {\em some} data points in a class, without bounding the probability of this event.
\begin{theorem}[Condition for existence of adversarial examples] \label{existTheorem}
Consider the setup of Theorem \ref{cubeTheorem}.   Choose a class $c$ that occupies a fraction of the cube $f_c<\half.$ Pick an $\ell_p$ norm and set $p^*=\min(p,2).$ 

Let $\supp(\rho_c)$ denote the support of $\rho_c.$  Then there is a point $x$ with $\rho_c(x)>0$ that admits an $\epsilon$-adversarial example if

\aln{
 \vol[\supp(\rho_c)] \ge
  \begin{cases}
   \half\exp(- \pi \epsilon^2 n^{1-2/p^*}) , &\text{ for $p>0$ or} \\
    \exp\left( - 2\left(\epsilon -\sqrt{\frac{n\log 2}{2}}\right)^2/n \right) , &\text{ for $p=0.$}
\end{cases}
}
The bound for the case $p=0$ is valid only if $\epsilon \ge \sqrt{n\log 2/2}.$

\end{theorem}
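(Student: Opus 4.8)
The plan is to reduce the existence claim to a single volume comparison, exactly as in the proofs of Theorems~\ref{sphereTheorem} and~\ref{cubeTheorem}, but replacing the probability estimate by a support-size estimate. Let $\setR=\{x\mid\setC(x)=c\}$, so that $\vol[\setR]=f_c<\half$ and the complement $\overline\setR$ satisfies $\vol[\overline\setR]=1-f_c>\half$. A point $x$ admits an $\epsilon$-adversarial example precisely when it lies within distance $\epsilon$ of $\overline\setR$, i.e.\ when $x\in\overline\setR(\epsilon,d_p)$, so the ``safe'' set on which no adversarial example exists is $\setS_c=[0,1]^n\setminus\overline\setR(\epsilon,d_p)$, with $\vol[\setS_c]=1-\vol[\overline\setR(\epsilon,d_p)]$. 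The whole argument then rests on the observation that if $\vol[\supp(\rho_c)]>\vol[\setS_c]$, the support cannot be contained in $\setS_c$, so some point of the support — and, after a small perturbation, some point of positive density — must admit an adversarial example. Thus it suffices to bound $\vol[\setS_c]$ above by the right-hand side of the claimed inequality.

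For $p>0$ I would feed $\overline\setR$ into the tight form~\eqref{tightPhiBound} of Lemma~\ref{cubeiso}. Since $\vol[\overline\setR]=1-f_c\ge\half$, the scalar $\alpha=\Phi^{-1}(1-f_c)$ is nonnegative, and \eqref{tightPhiBound} gives $\vol[\overline\setR(\epsilon,d_p)]\ge\Phi\!\big(\alpha+\sqrt{2\pi n^{1-2/p^*}}\,\epsilon\big)$, hence $\vol[\setS_c]\le\hat\Phi\!\big(\alpha+\sqrt{2\pi n^{1-2/p^*}}\,\epsilon\big)$. Using $\alpha\ge0$ together with the standard Gaussian tail bound $\hat\Phi(z)\le\half e^{-z^2/2}$ collapses this to $\vol[\setS_c]\le\half\exp(-\pi n^{1-2/p^*}\epsilon^2)$, which is exactly the $p>0$ threshold. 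It is essential here to start from the sharp estimate~\eqref{tightPhiBound} rather than the cleaned-up~\eqref{cleanCubeBound}, which would carry spurious factors of $n$.

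For $p=0$ the stated threshold $\exp(-2(\epsilon-\sqrt{n\log2/2})^2/n)$ is sharper than what Lemma~\ref{smallIso} yields directly, so I would instead argue through bounded-differences (McDiarmid) concentration applied to the Hamming-distance function $g(x)=d_0(x,\overline\setR)$. Changing one coordinate of $x$ changes $g$ by at most $1$, so $g$ has the bounded-difference property with all constants equal to $1$, and McDiarmid gives the one-sided tails $\Pr[g\ge\E g+t]\le e^{-2t^2/n}$ and $\Pr[g\le\E g-t]\le e^{-2t^2/n}$. Because $\vol[\overline\setR]\ge\half$ forces $\Pr[g=0]\ge\half$, the lower tail pins the mean down to $\E g\le\sqrt{(n/2)\log 2}=\sqrt{n\log2/2}$, and the upper tail then gives, for $\epsilon\ge\sqrt{n\log2/2}$, the bound $\vol[\setS_c]=\Pr[g>\epsilon]\le\exp(-2(\epsilon-\E g)^2/n)\le\exp(-2(\epsilon-\sqrt{n\log2/2})^2/n)$, the claimed $p=0$ threshold. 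Combining either bound with the volume comparison of the first paragraph finishes the proof.

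The step I expect to be most delicate is not the concentration estimate but the promotion of ``a point of the support admits an adversarial example'' to ``a point of strictly positive density admits one,'' since $\supp(\rho_c)$ can in principle contain a positive-measure set on which $\rho_c$ vanishes. For $p>0$ this is clean: the Gaussian tail bound is strict when $\alpha>0$, leaving slack to run the comparison at a slightly smaller radius $\epsilon'<\epsilon$, after which any support point $z$ with $d_p(z,\overline\setR)\le\epsilon'$ carries a full Euclidean ball $B(z,\epsilon-\epsilon')$ of positive-density neighbours, each still within $\epsilon$ of $\overline\setR$. For $p=0$ the same device is awkward because small $\ell_0$-balls are degenerate, and one must either invoke regularity of $\rho_c$ on its support or read ``$\rho_c(x)>0$'' as ``$x\in\supp(\rho_c)$''; I would flag this as the one genuinely technical point of the argument.
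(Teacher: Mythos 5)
Your proof is correct, and it reaches the paper's thresholds by a route that differs in a meaningful way in both halves. The paper runs the comparison in the opposite direction: it expands the \emph{support}, setting $\eta=\vol[\supp(\rho_c)]$, applies \eqref{tightPhiBound} (resp.\ the optimized bound \eqref{superTightBound} for $p=0$) to $\supp(\rho_c)$ itself, lower-bounds $\alpha=\Phi^{-1}(\eta)$ via $\Phi(\alpha)\le\half e^{-\alpha^2/2}$ for $\alpha<0$, and shows $\vol[\supp(\rho_c)(\epsilon,d_p)]>\half>f_c$, so the expansion must spill into $\overline\setR$. You instead expand $\overline\setR$ (volume $>\half$, hence $\alpha\ge 0$) and compare the residual safe set $\setS_c$ to the support volume; for $p>0$ the two computations are mirror images of one another, since your tail bound $\hat\Phi(z)\le\half e^{-z^2/2}$ is exactly the paper's inequality reflected through $\hat\Phi(z)=\Phi(-z)$, and your remark that \eqref{cleanCubeBound} would not suffice is right for $p<2$ (it carries a factor $n^{1/p^*-1/2}$), though for $p\ge2$ it would actually have been fine. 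For $p=0$ your route is genuinely different: the paper invokes its Talagrand-style bound \eqref{superTightBound} on the support, with the $\sqrt{n\log 2/2}$ term emerging from forcing the expansion past measure $\half$, whereas you rederive the same threshold from scratch via McDiarmid applied to $g(x)=d_0(x,\overline\setR)$, using the lower tail and $\Pr[g=0]\ge\half$ to pin $\E g\le\sqrt{n\log 2/2}$; this is the classical bounded-differences argument (the paper cites McDiarmid for precisely this case), and it is more elementary in that it bypasses the inductive machinery of Appendix B --- though you could equally have cited \eqref{superTightBound} with $\sigma[\setA]\ge\half$ and gotten the identical estimate. Your closing flag is also apt: the paper's proof, like yours, only produces a point of $\supp(\rho_c)$ and silently identifies this with $\rho_c(x)>0$; your slack-based patch for $p>0$ works (every neighborhood of a support point carries positive $\rho_c$-mass, hence points of positive density within $\ell_p$-distance $\epsilon$ of $\overline\setR$), and for $p=0$ reading the conclusion as membership in the support is the only sensible repair, since $\ell_0$-balls of radius less than $n$ are Lebesgue-null. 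The residual mismatch between the theorem's non-strict ``$\ge$'' and the strict inequalities both proofs actually deliver is shared with the paper and is cosmetic, as your strictness observations ($\alpha>0$, $\E g<\sqrt{n\log 2/2}$) already absorb it.
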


It is interesting to consider when Theorem \ref{existTheorem} produces non-vacuous bounds. When the $\ell_2$-norm is used, the bound becomes 
$ \vol[\supp(\rho_c)] \ge  \exp( -\pi \epsilon^2 )/2.$ The diameter of the cube is $\sqrt{n},$ and so the bound becomes active for $\epsilon=\sqrt{n}.$ Plugging this in, we see that the bound is active whenever the size of the support satisfies 
  $\vol[\supp(\rho_c)]  > \frac{1}{2e^{\pi n}}.$
 Remarkably, this holds for large $n$ whenever the support of class $c$ is larger than (or contains) a hypercube of side length at least $e^{-\pi} \approx 0.043.$  It is important note, however, that the bound being ``active'' does not mean there are adversarial examples with a ``small'' $\epsilon.$

\section{Discussion: Can we escape fundamental bounds?}
There are a number of ways to escape the guarantees of adversarial examples made by Theorems 1-4.   One potential escape is for the class density functions to take on extremely large values (i.e., exponentially large $U_c$); the dependence of $U_c$ on $n$ is addressed separately in Section \ref{experiments}.  


\vspace{-3mm}\paragraph{\textbf{Unbounded density functions and low-dimensional data manifolds}}
In practice, image datasets might lie on low-dimensional manifolds within the cube, and the support of these distributions could have measure zero,  making the density function infinite (i.e., $U_c=\infty$).  The arguments above are still relevant (at least in theory) in this case; we can expand the data manifold by adding a uniform random noise to each image pixel of magnitude at most $\epsilon_1.$ The expanded dataset has positive volume.  Then, adversarial examples of this expanded dataset can be crafted with perturbations of size $\epsilon_2$. 
%
%
This method of expanding the manifold before crafting adversarial examples is often used in practice.  \cite{tramer2017ensemble} proposed adding a small perturbation to step off the image manifold before crafting adversarial examples. This strategy is also used during adversarial training \citep{madry2017towards}.

\vspace{-3mm}\paragraph{\textbf{Adding a ``don't know'' class}}
The analysis above assumes the classifier assigns a label to every point in the cube.  If a classifier has the ability to say ``I don't know,'' rather than assign a label to every input, then the region of the cube that is assigned class labels might be very small, and adversarial examples could be escaped even if the other assumptions of Theorem \ref{existTheorem} are satisfied.    In this case, it would still be easy for the adversary to degrade classifier performance by perturbing images into the ``don't know'' class.

\vspace{-3mm}\paragraph{\textbf{Feature squeezing}} 

If decreasing the dimensionality of data does not lead to substantially increased values for $U_c$ (we see in Section \ref{experiments} that this is a reasonable assumption) or loss in accuracy (a stronger assumption), measuring data in lower dimensions could increase robustness.  
This can be done via an auto-encoder \citep{meng2017magnet,shen2017ape}, JPEG encoding \citep{das2018shield}, or quantization \citep{xu2017feature}. 

\vspace{-3mm}\paragraph{\textbf{Computational hardness}}
It may be computationally hard to craft adversarial examples because of local flatness of the classification function, obscurity of the classifier function, or other computational difficulties.  Computational hardness could prevent adversarial attacks in practice, even if adversarial examples still exist.

\section{Experiments \& Effect of Dimensionality} \label{experiments}
In this section, we discuss the relationship between dimensionality and adversarial robustness, and explore how the predictions made by the theorems above are reflected in experiments.

It is commonly thought that high-dimensional classifiers are more susceptible to adversarial examples than low-dimensional classifiers.  This perception is partially motivated by the observation that classifiers on high-resolution image distributions like ImageNet are more easily fooled than low resolution classifiers on MNIST \citep{tramer2017ensemble}.   Indeed, Theorem \ref{cubeTheorem} predicts that high-dimensional classifiers should be much easier to fool than low-dimensional classifiers, assuming the datasets they classify have comparable probability density limits $U_c.$ However, this is not a reasonable assumption; we will see below that high dimensional distributions may be more concentrated than their low-dimensional counterparts. 

 We study the effects of dimensionality with a thought experiment involving a ``big MNIST'' image distribution.  Given an integer expansion factor $b,$ we can make a big MNIST distribution, denoted $b$-MNIST, by replacing each pixel in an MNIST image with a $b\times b$ array of identical pixels. This expands an original $28\times 28$ image into a $28b\times 28b$ image.  Figure \ref{susceptibility}a shows that, without adversarial training, a classifier on big MNIST is far more susceptible to attacks than a classifier trained on the original MNIST\footnote{
 Only the fully-connected layer is modified to handle the difference in dimensionality between datasets.}.  
 
 However, each curve in Figure \ref{susceptibility}a only shows the attack susceptibility of one particular classifier.  In contrast, Theorems 1-4 describe the {\em fundamental} limits of susceptibility for all classifiers.  These limits are an inherent property of the data distribution.
 The theorem below shows 
  that these fundamental limits {\em do not} depend in a non-trivial way on the dimensionality of the images in big MNIST, and so
  the relationship between dimensionality and susceptibility in Figure \ref{susceptibility}a  results from the weakness of the training process.  
  
 \begin{theorem} \label{mnistTheorem}
Suppose $\epsilon$ and $p$ are such that, for all MNIST classifiers, a random image from class $c$ has an $\epsilon$-adversarial example (in the $\ell_2$-norm) with probability at least $p$. Then for all classifiers on $b$-MNIST, with integer $b\ge 1,$ a random image from $c$ has a $b\epsilon$-adversarial example with probability at least $p$.

Likewise, if all $b$-MNIST classifiers have $b\epsilon$-adversarial examples with probability $p$ for some $b\ge1$, then all classifiers on the original MNIST distribution have $\epsilon$-adversarial examples with probability $p$.
\end{theorem}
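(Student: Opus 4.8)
The plan is to exploit two natural linear maps between the MNIST cube $[0,1]^n$ (with $n=28^2$) and the $b$-MNIST cube $[0,1]^{nb^2}$. Let $T$ be the \emph{replication} map sending an MNIST image to its $b$-MNIST version by copying each pixel into a $b\times b$ block, and let $P$ be the \emph{block-averaging} map sending a $b$-MNIST image back to MNIST by averaging each $b\times b$ block. By construction the $b$-MNIST class distribution is the pushforward of $\rho_c$ under $T$, so a random $b$-MNIST image from class $c$ is exactly $T(x)$ with $x\sim\rho_c$. Two metric facts drive the whole argument. First, replication scales Euclidean distance exactly: each pixel difference is copied $b^2$ times, so $\|T(x)-T(y)\|_2=b\,\|x-y\|_2$. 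Second, averaging contracts Euclidean distance by $1/b$: a per-block Cauchy--Schwarz estimate gives $\|P(z)-P(w)\|_2\le\tfrac{1}{b}\|z-w\|_2$, and moreover $P\circ T$ is the identity on MNIST images. Finally $T([0,1]^n)\subseteq[0,1]^{nb^2}$ and $P([0,1]^{nb^2})\subseteq[0,1]^n$, since replication preserves and averaging cannot leave $[0,1]$. I would also note at the outset that, under the paper's Definition, a misclassified point is trivially its own $\epsilon$-adversarial example ($\hat x=x$), so ``has an adversarial example'' already subsumes the misclassified case and no separate bookkeeping is needed.

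For the forward direction I would reduce at the level of classifiers. Fix an arbitrary $b$-MNIST classifier $\setC_b$ and pull it back to the MNIST classifier $\setC:=\setC_b\circ T$, which is measurable since $T$ is linear. By hypothesis, with probability at least $p$ a random $x\sim\rho_c$ admits an $\epsilon$-adversarial example $\hat x$ under $\setC$. Whenever it does, $T(\hat x)$ satisfies $\|T(x)-T(\hat x)\|_2=b\|x-\hat x\|_2\le b\epsilon$ and $\setC_b(T(\hat x))=\setC(\hat x)\neq c$, so $T(x)$ admits a $b\epsilon$-adversarial example under $\setC_b$. Since $T(x)$ is precisely a random $b$-MNIST image from class $c$, this event has probability at least $p$; as $\setC_b$ was arbitrary, the claim follows.

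For the reverse direction, fix an arbitrary MNIST classifier $\setC$ and build the $b$-MNIST classifier $\setC_b:=\setC\circ P$. The hypothesis yields, for a random $T(x)$ with $x\sim\rho_c$, a $b\epsilon$-adversarial point $\hat z$ with $\|T(x)-\hat z\|_2\le b\epsilon$ and $\setC(P(\hat z))\neq c$, with probability at least $p$. Setting $\hat x:=P(\hat z)$, we have $\setC(\hat x)\neq c$, $\hat x\in[0,1]^n$, and, using $P\circ T=\mathrm{id}$ together with the $1/b$ contraction,
\[
\|x-\hat x\|_2=\|P(T(x))-P(\hat z)\|_2\le\tfrac{1}{b}\|T(x)-\hat z\|_2\le\epsilon .
\]
Hence $x$ admits an $\epsilon$-adversarial example under $\setC$ with probability at least $p$, and since $\setC$ was arbitrary we are done.

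The main obstacle is the reverse direction, and specifically that the $b\epsilon$-adversarial point $\hat z$ guaranteed on the big cube need \emph{not} be block-constant, i.e.\ need not lie in the range of $T$. The entire reduction hinges on $P$ being a genuine $1/b$-contraction on \emph{all} of $[0,1]^{nb^2}$, not merely on block-constant images; this is exactly the Cauchy--Schwarz estimate, where the per-block averaging functional $z\mapsto b^{-2}\sum_j z_j$ has operator norm $b^{-2}\|\mathbf 1\|_2=1/b$. This contraction is precisely what converts an arbitrary nearby adversarial point in the large cube into a valid nearby adversarial image in the small cube. Verifying the contraction constant, the identity $P\circ T=\mathrm{id}$, and that $P$ maps the cube into the cube are the only computations, and all are routine.
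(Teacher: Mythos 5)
Your proposal is correct and is essentially the paper's own argument, made rigorous: the paper's ``upsample then feed a high-resolution back-end'' construction is exactly your pullback classifier $\setC_b\circ T$, and its ``downsample by block-averaging then feed a low-resolution back-end'' construction is exactly $\setC\circ P$, with the same two metric facts (exact scaling $\|T(x)-T(y)\|_2=b\|x-y\|_2$ and the $1/b$ contraction of block averaging). Your explicit Cauchy--Schwarz verification of the contraction, the identity $P\circ T=\mathrm{id}$, and the remark that misclassified points are trivially their own adversarial examples are welcome formalizations of steps the paper leaves implicit, but the route is the same.
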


\begin{figure}
\centering
\vspace{-.4cm}
\begin{minipage}{.45\linewidth}
\centering
(a) MNIST at different resolutions
 \vspace{-1mm}
\includegraphics[width=\linewidth, trim=1.6cm 0cm 3cm 1.5cm, clip]{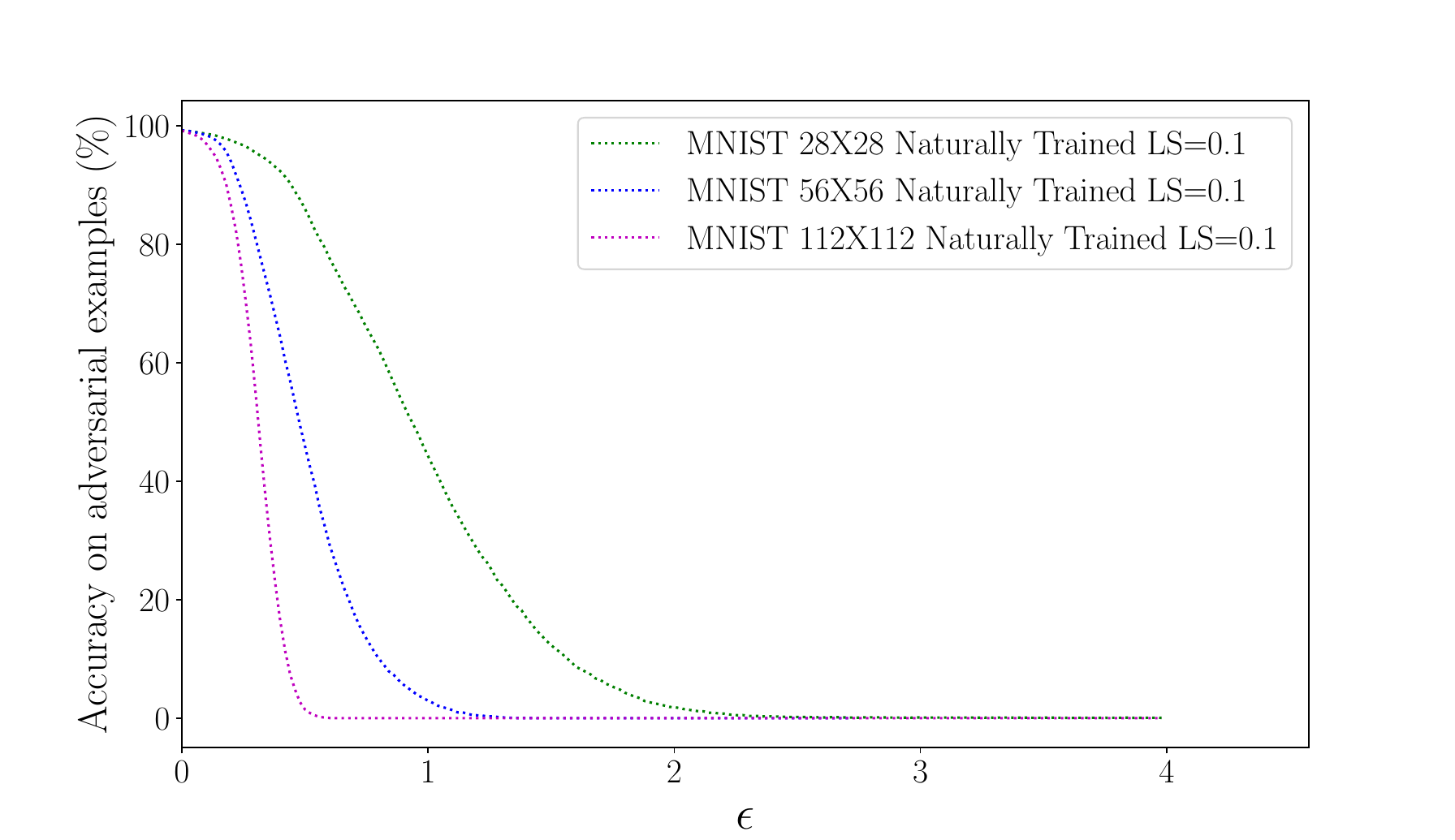}
\end{minipage}
\begin{minipage}{.45\linewidth}
\centering
(b)  MNIST with adver. training  \vspace{-1mm}
\includegraphics[width=\linewidth, trim=1.6cm 0cm 3cm 1.5cm, clip]{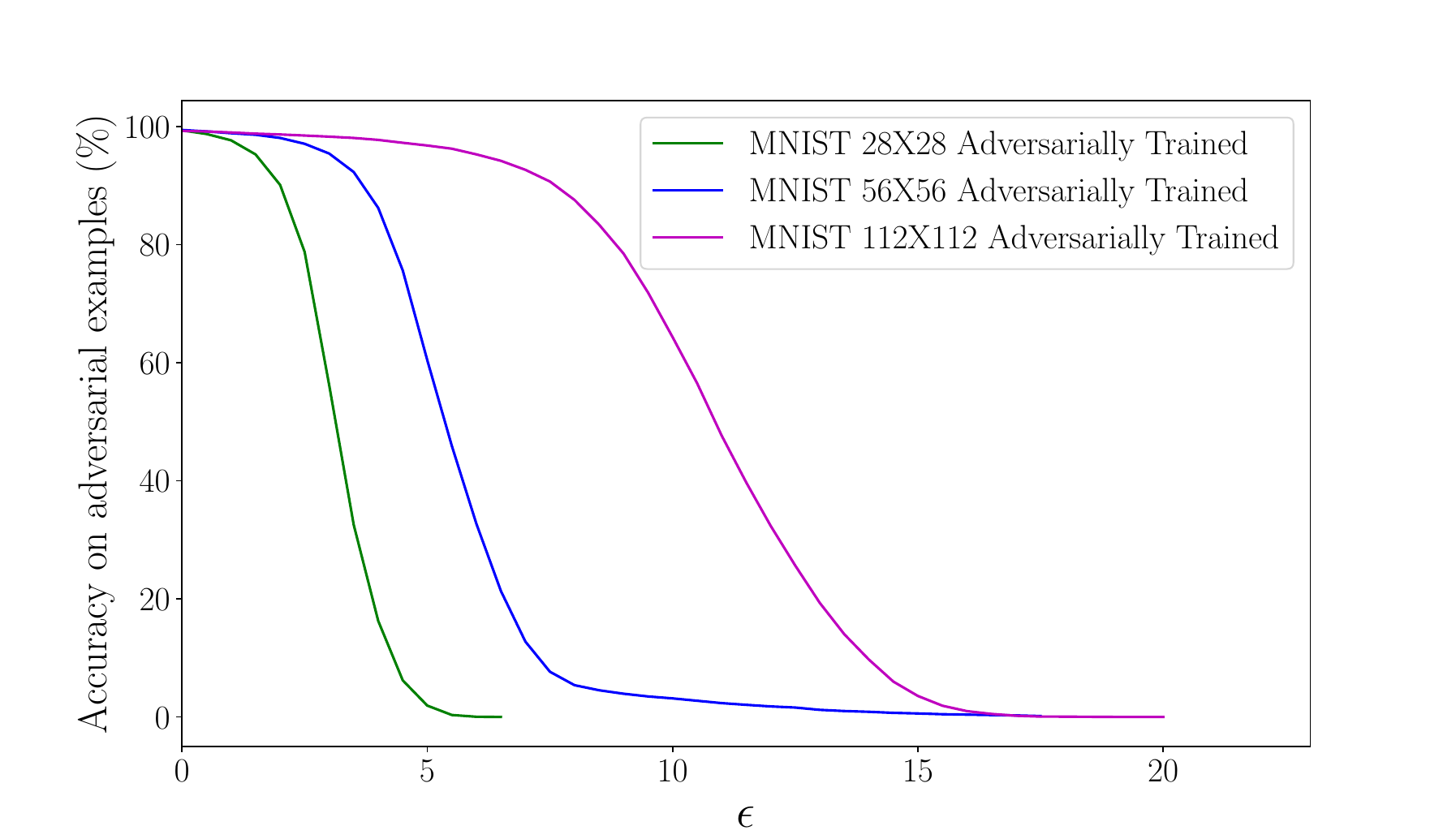}
\end{minipage}
\begin{minipage}{.45\linewidth}
\centering
(c) CIFAR-10 vs big MNIST 
\includegraphics[width=\linewidth, trim=1.6cm 0cm 2.6cm 1.5cm, clip]{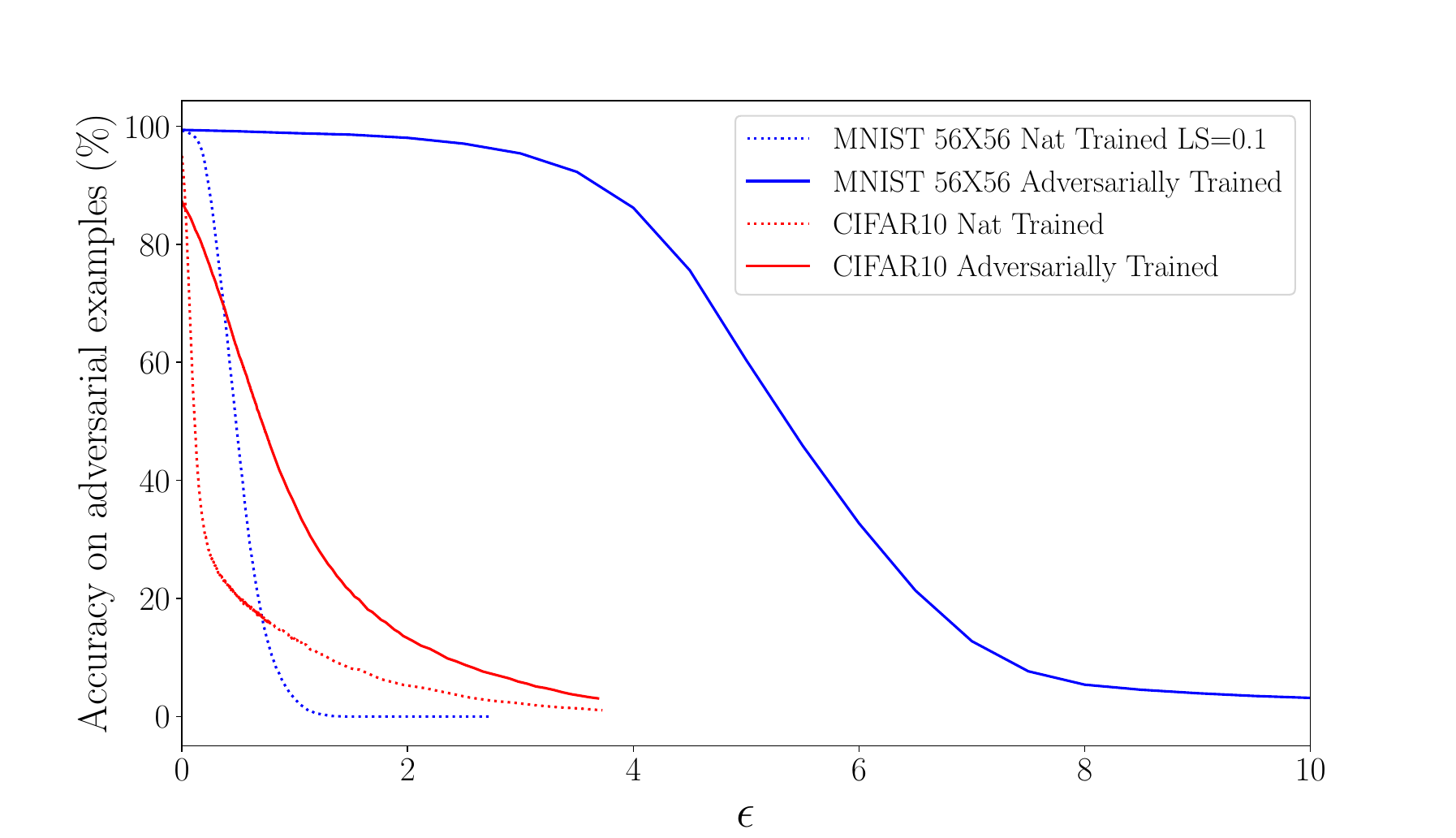}
\end{minipage}
 \vspace{-3mm}
\caption{\small (a) Robustness of MNIST and ``big'' MNIST classifiers as a function of $\epsilon$.  Naturally trained classifiers are less robust with increased dimensionality.  (b)  With adversarial training, susceptibility curves behave as predicted by Theorems \ref{cubeTheorem} and \ref{mnistTheorem}.
(c) The susceptibility of CIFAR-10 is compared to big MNIST.  Both datasets have similar dimension, but the higher complexity of CIFAR-10 results in far worse susceptibility.  Perturbations are measured in the $\ell_2$-norm.}
 \label{susceptibility}
  \vspace{-2mm}
\end{figure}

 
Theorem \ref{mnistTheorem} predicts that the perturbation needed to fool all $56\times 56$ classifiers is twice that needed to fool all  $28\times 28$ classifiers.  This is reasonable since the $\ell_2$-norm of a  $56\times 56$ image is twice that of its $28\times 28$ counterpart.  Put simply, fooling big MNIST is just as hard/easy as fooling the original MNIST regardless of resolution.  This also shows that for big MNIST, as the expansion factor $b$ gets larger and $\epsilon$ is expanded to match, the concentration bound $U_c$ grows at exactly the same rate as the exponential term in \eqref{cubeTheorem}  shrinks, and there is no net effect on fundamental susceptibility.    Also note that an analogous result could be based on any image classification problem (we chose MNIST only for illustration), and any $p \ge 0.$
 
We get a better picture of the fundamental limits of MNIST by considering classifiers that are hardened by adversarial training\footnote{Adversarial examples for MNIST/CIFAR-10 were produced as in \cite{madry2017towards} using 100-step/20-step PGD.} (Figure \ref{susceptibility}b).  These curves display several properties of fundamental limits predicted by our theorems. 
As predicted by Theorem \ref{mnistTheorem}, the $112\times 112$ classifer curve is twice as wide as the $56\times 56$ curve, which in turn is twice as wide as the $28\times 28$ curve.  In addition, we see the kind of ``phase transition'' behavior predicted by Theorem \ref{cubeTheorem}, in which the classifier suddenly changes from being highly robust to being highly susceptible as $\epsilon$ passes a critical threshold.  For these reasons, it is reasonable to suspect that the adversarially trained classifiers in Figure \ref{susceptibility}b are operating near the fundamental limit predicted by Theorem \ref{cubeTheorem}. 

Theorem \ref{mnistTheorem} shows that increased dimensionality does not increase adversarial susceptibility in a fundamental way.   But then why are high-dimensional classifiers so easy to fool?
To answer this question, we look at the concentration bound $U_c$ for object classes. 
The smallest possible value of $U_c$ is 1, which only occurs when images are ``spread out'' with uniform, uncorrelated pixels.   In contrast, adjacent pixels in MNIST (and especially big MNIST) are very highly correlated, and images are concentrated near simple, low-dimensional manifolds, resulting in highly concentrated image classes with large $U_c$.  Theory predicts that such highly concentrated datasets can be relatively safe from adversarial examples.

   We can reduce $U_c$ and dramatically increase susceptibility by choosing a more ``spread out'' dataset, like CIFAR-10, in which adjacent pixels are less strongly correlated and images appear to concentrate near complex, higher-dimensional manifolds. 
We observe the effect of decreasing $U_c$ by plotting the susceptibility of a $56\times 56$ MNIST classifier against a classifier for CIFAR-10 (Figure \ref{susceptibility}, right).  The former problem lives in $3136$ dimensions, while the latter lives in $3072,$ and both have 10 classes.  Despite the structural similarities between these problems, the decreased concentration of CIFAR-10 results in vastly more susceptibility to attacks, regardless of whether adversarial training is used.  The theory above suggests that this increased susceptibility is caused at least in part by a shift in the fundamental limits for CIFAR-10, rather than the weakness of the particular classifiers we chose.

Informally, the concentration limit $U_c$ can be interpreted as a measure of image {\em complexity}.  Image classes with smaller $U_c$ are likely concentrated near high-dimensional complex manifolds, have more intra-class variation, and thus more apparent complexity.  An informal interpretation of Theorem \ref{cubeTheorem} is that ``high complexity'' image classes are fundamentally more susceptible to adversarial examples, and Figure \ref{susceptibility} suggests that complexity (rather than dimensionality) is largely responsible for differences we observe in the effectiveness of adversarial training for different datasets.

\section{So...are adversarial examples inevitable?}

The question of whether adversarial examples are inevitable is an ill-posed one. Clearly, any classification problem has a fundamental limit on robustness to adversarial attacks that cannot be escaped by any classifier.   However, we have seen that these limits depend not only on fundamental properties of the dataset, but also on the strength of the adversary and the metric used to measure perturbations.  
This paper provides a characterization of these limits and how they depend on properties of the data distribution.
Unfortunately, it is impossible to know the exact properties of real-world image distributions or the resulting fundamental limits of adversarial training for specific datasets.
However, the analysis and experiments in this paper suggest that, especially for complex image classes in high-dimensional spaces, these limits may be far worse than our intuition tells us.


%
\section{Acknowledgements}
\sloppy
T. Goldstein and his students were generously supported by DARPA Lifelong Learning Machines (FA8650-18-2-7833), the Office of Naval Research (N00014-17-1-2078), the DARPA Young Faculty Award program (D18AP00055), and the Sloan Foundation. The work of C. Studer was supported in part by Xilinx Inc., and by the US NSF under grants ECCS-1408006, CCF-1535897,  CCF-1652065, and CNS-1717559.

\bibliographystyle{iclr2019_conference}
\bibliography{refs}

\appendix

\section{Proof of lemma \ref{cubeiso}} \label{cubeIsoAppend}

We now prove Lemma \ref{cubeiso}.  To do this, we begin with a classical isoperimetric inequality for random Gaussian variables.  Unlike the case of a cube, tight geometric isoperimetric inequalities exist in this case.  We then prove results about the cube by creating a mapping between uniform random variables on the cube and random Gaussian vectors.
%
%
%

In the lemma below, we consider the standard Gaussian density in $\reals^n$ given by
$p(x) = \frac{1}{(2\pi)^{n/2}} e^{-nx^2/2}$
and corresponding Gaussian measure $\mu.$  We also define
  $$\Phi(z) =\frac{1}{\sqrt{2\pi}} \int_{-\infty}^{z} e^{-t^2/2} dt,$$
  which is the cumulative density of a Gaussian curve.

The following Lemma was first proved in \cite{sudakov1978}, and an elementary proof was given in \cite{bobkov1997isoperimetric}.
\begin{lemma}[Gaussian Isoperimetric Inequality] \label{gaussiso}

Of all sets with the same Gaussian measure, the set with $\ell_2$ $\epsilon$-expansion of smallest measure is a half space. Furthermore, for any measurable set $\setA\subset \reals^n,$ and scalar constant $a$ such that $\Phi(a) = \mu[\setA],$ 
$$ \mu[\setA(\epsilon, d_2)] \ge \Phi(a+\epsilon).$$

\end{lemma}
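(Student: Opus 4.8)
The plan is to deduce the Gaussian statement from the spherical isoperimetric inequality already in hand (Lemma~\ref{isoSphere}), via the classical \emph{Poincar\'e limit}: the standard Gaussian measure $\mu$ on $\reals^n$ is the weak limit, as $N\to\infty$, of the pushforward onto the first $n$ coordinates of the uniform (normalized surface) measure on the sphere $\sqrt{N}\,\sphere{N}$ of radius $\sqrt N$ in $\reals^N$. The idea is that isoperimetry on each sphere transfers, in the limit, to isoperimetry for $\mu$, with geodesic caps degenerating into half-spaces. This route is natural here because it recycles the spherical inequality the paper has already developed; a fully self-contained alternative \citep{bobkov1997isoperimetric} is sketched at the end.

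First I would fix a measurable $\setA\subset\reals^n$ with $\mu[\setA]=\Phi(a)$ and, for each $N$, pull it back through the coordinate projection $\pi_N\colon\sqrt N\,\sphere{N}\to\reals^n$ to the set $\pi_N^{-1}(\setA)$ on the sphere. By the Poincar\'e limit its normalized surface measure converges to $\Phi(a)$. On the sphere I would apply L\'evy's isoperimetric inequality \citep{levy1951problems} (of which Lemma~\ref{isoSphere} is the half-measure instance): among sets of a fixed measure, a geodesic cap has the smallest $\delta$-expansion. A cap is exactly the intersection of the sphere with a half-space $\{x_1\le t\}$, and the extremal cap of measure $\Phi(a)$ is the one with $t=a$; its geodesic expansion is again a cap $\{x_1\le t'\}$, which projects onto a half-space whose Gaussian measure is $\Phi(t')$ by one-dimensionality.

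The link between the two metrics is the second ingredient: for points near the equator of $\sqrt N\,\sphere{N}$, the geodesic distance and the ambient Euclidean distance agree to leading order, so a Euclidean $\epsilon$-expansion in $\reals^n$ is recovered (in the limit) from geodesic expansions of radius $\approx\epsilon$, forcing $t'\to a+\epsilon$. Combining the spherical lower bound with these two convergences and sending $N\to\infty$ yields $\mu[\setA(\epsilon,d_2)]\ge\Phi(a+\epsilon)$. Since the half-space $\{x_1\le a\}$ has measure $\Phi(a)$ and its $\ell_2$ $\epsilon$-expansion is precisely $\{x_1\le a+\epsilon\}$ with measure $\Phi(a+\epsilon)$, equality is attained there, which simultaneously establishes the extremality of half-spaces claimed in the first sentence of the lemma.

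The main obstacle is making the double limit rigorous, in particular interchanging the $\epsilon$-expansion operation with the Poincar\'e limit. Two points need care: controlling the mass that $\pi_N$ sends near the poles (where the projection degenerates), via a tightness/truncation argument; and proving that the Euclidean $\epsilon$-expansion in $\reals^n$ is squeezed between geodesic expansions of radii $\epsilon(1\pm o(1))$ on $\sqrt N\,\sphere{N}$, using a two-sided comparison of geodesic and chordal distance. Once these convergences are in place, the spherical inequality passes to the limit term by term. The self-contained alternative is Bobkov's route: prove a two-point functional inequality for the isoperimetric profile $I=\varphi\circ\Phi^{-1}$ (with $\varphi$ the standard Gaussian density) on $\{-1,1\}$, tensorize it to $\{-1,1\}^N$, pass to $\mu$ by the central limit theorem, and recover the set statement by applying the resulting functional inequality to smooth approximations of $\1_{\setA}$.
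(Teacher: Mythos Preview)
The paper does not actually prove this lemma; it simply cites it as a classical result, attributing the original to \cite{sudakov1978} and pointing to \cite{bobkov1997isoperimetric} for an elementary proof. Your proposal therefore goes well beyond what the paper provides.

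The route you outline---deriving Gaussian isoperimetry from L\'evy's spherical inequality via the Poincar\'e limit (projecting the uniform measure on $\sqrt{N}\,\sphere{N}$ onto $n$ coordinates and letting $N\to\infty$)---is a standard and correct classical argument; a rigorous treatment appears, for instance, in \cite{ledoux2001concentration}. The technical obstacles you flag (tightness away from the poles, the two-sided geodesic/chordal comparison, and justifying the interchange of $\epsilon$-expansion with the weak limit) are exactly the points that need care, and none is fatal. One small caveat: the paper's Lemma~\ref{isoSphere} is stated only for sets of measure at least $1/2$, so---as you correctly note---you need the full L\'evy inequality \citep{levy1951problems} rather than the half-measure instance recorded here. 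Your sketched alternative via Bobkov's two-point functional inequality is precisely the ``elementary proof'' the paper cites, so in that sense you have independently landed on the paper's own preferred reference.
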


Using this result we can now give a proof of Lemma \ref{cubeiso}.%

%
%

This function $\Phi$ maps a random {\em Guassian} vector $z\in N(0,I)$ onto a random {\em uniform} vector in the unit cube.  
To see why, consider a measurable subset $\setB \subset R^n.$  If $\mu$ is the Gaussian measure on $\reals^n$ and $\sigma$ is the uniform measure on the cube, then 
$$\sigma[\Phi(\setB)] = \int \chi_{\Phi(\setB)}(s) d\sigma =\int \chi_{\Phi(\setB)}(\Phi(z)) \frac{1}{\det(J \Phi)} dz = \int \chi_{\setB}(z) d\mu = \mu[\setB].$$

Since $\frac{\partial}{\partial z_i} \Phi(z) \le \frac{1}{\sqrt{2\pi}},$ we also have
$$\|\Phi(z) - \Phi(w)\|_p \le \| \frac{1}{\sqrt{2\pi}}  (x-w) \|_p = \frac{1}{\sqrt{2\pi}} \|z-w\|_p$$
for any $z,w\in \reals^n.$ From this, we see that for $p^* = \min(p,2)$
\aln{\label{pullback}
 \|\Phi(z)-\Phi(w)\|_p \le   n^{1/p^*-1/2} \|\Phi(z)-\Phi(w)\|_2  \le \frac{n^{1/p^*}}{\sqrt{2\pi n}}\|z-w\|_2
  }
  where we have used the identity $\|u\|_p \le n^{1/\min(p,2)-1/2}\|u\|_2.$

Now, consider any set $\setA$ in the cube, and let $\setB=\Phi^{-1}(\setA).$  From \eqref{pullback}, we see that 
$$\Phi \setB\left( \frac{\sqrt{2\pi n}}{n^{1/p^*}}\epsilon,d_2\right) \subset  \setA(\epsilon,d_p). $$
It follows from \eqref{pullback} that
$$
\sigma[\setA(\epsilon,d_p)] \ge \mu\left[\setB\left( \frac{\sqrt{2\pi n}}{n^{1/p^*}}\epsilon,d_2\right) \right].
$$
Applying Lemma \ref{gaussiso}, we see that
\aln{ \label{alphaPhiBound}
\sigma[\setA(\epsilon,d_p)]  \ge \Phi\left( \alpha+\frac{\sqrt{2\pi n}}{n^{1/p^*}}\epsilon\right)
}
where $\alpha = \Phi^{-1}(\sigma[\setA]).$

To obtain the simplified formula in the theorem, we use the identity
$$\frac{1}{\sqrt{2\pi}} \int_x^\infty e^{-t^2}dt <  \frac{e^{-x^2}}{\sqrt{2\pi}x }$$
which is valid for $x>0,$ and can be found in \cite{abramowitz1965handbook}.

\section{Proof of Lemma \ref{smallIso}} \label{cubeProof}

Our proof emulates the method of Talagrand, with modifications that extend the result to other $\ell_p$ norms.  We need the following standard inequality.  Proof can be found in \cite{talagrand1995concentration,talagrand1996new}.

\begin{lemma}[Talagrand]\label{talagrand}
Consider a probability space $\Omega$ with measure $\mu.$
For $g:\Omega \to [0,1],$ we have
$$\int_\Omega \min\left(e^t,\frac{1}{g^\alpha}\right) d\mu \, \times \, \left(\int_\Omega g d\mu\right)^\alpha 
         \le \exp\left(\frac{t^2(\alpha+1)}{8\alpha}\right).$$
\end{lemma}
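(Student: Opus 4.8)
The plan is to reduce the inequality to a one-dimensional optimization. Since $\mu$ is a probability measure, the left-hand side depends on $g$ only through the distribution of its values in $[0,1]$; write $m=\int_\Omega g\,d\mu\in[0,1]$ and set $\psi(s)=\min(e^t,s^{-\alpha})$, so that the claim reads $\big(\int_\Omega \psi(g)\,d\mu\big)\,m^\alpha\le \exp\!\big(t^2(\alpha+1)/(8\alpha)\big)$. First I would dispose of the trivial range $t\le 0$: there $e^t\le 1\le s^{-\alpha}$ for all $s\in(0,1]$, so $\psi\equiv e^t$, giving left-hand side $e^t m^\alpha\le 1$, which is below the right-hand side. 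Hence assume $t>0$, where $\psi$ equals the constant $e^t$ on $[0,e^{-t/\alpha}]$ and the strictly decreasing convex function $s^{-\alpha}$ on $[e^{-t/\alpha},1]$.

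The key step is to replace $\psi$ by its least concave majorant $\hat\psi$ on $[0,1]$ and apply Jensen. Because $\psi$ attains its global maximum $e^t$ on the whole initial segment and is convex afterward, $\hat\psi$ is explicit: it equals $e^t$ on $[0,e^{-t/\alpha}]$ and, on $[e^{-t/\alpha},1]$, equals the chord joining $(e^{-t/\alpha},e^t)$ to $(1,1)$. Then $\int_\Omega \psi(g)\,d\mu\le\int_\Omega \hat\psi(g)\,d\mu\le \hat\psi\!\big(\int_\Omega g\,d\mu\big)=\hat\psi(m)$, where the first inequality uses $\psi\le\hat\psi$ and the second is Jensen's inequality for the concave function $\hat\psi$. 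Consequently it suffices to prove the purely scalar bound $F(m):=\hat\psi(m)\,m^\alpha\le \exp\!\big(t^2(\alpha+1)/(8\alpha)\big)$ for every $m\in[0,1]$.

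The remaining step is to maximize $F$ over $m\in[0,1]$. On the flat part $m\le e^{-t/\alpha}$ one has $F(m)=e^t m^\alpha\le e^t(e^{-t/\alpha})^\alpha=1$, so the bound holds there with room to spare. On $[e^{-t/\alpha},1]$, where $\hat\psi$ is affine, $F(m)=\big(e^t+\tfrac{1-e^t}{1-e^{-t/\alpha}}(m-e^{-t/\alpha})\big)m^\alpha$ is smooth, and I would locate its maximizer $m^\star$ by solving $F'(m)=0$ and substituting back. The main obstacle is this final verification: one must show the resulting maximum does not exceed $\exp\!\big(t^2(\alpha+1)/(8\alpha)\big)$, an elementary but delicate transcendental inequality in $t$ and $\alpha$. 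As a guide, the case $\alpha=1$ collapses neatly — there $\tfrac{1-e^t}{1-e^{-t}}=-e^t$, the optimum is at $m^\star=(1+e^{-t})/2$, and $F(m^\star)=\cosh^2(t/2)$, so the claim reduces to the standard bound $\cosh(t/2)\le e^{t^2/8}$. I expect the general-$\alpha$ computation to follow the same pattern, with a Taylor comparison of $\log F$ against $t^2(\alpha+1)/(8\alpha)$ supplying the analogue of $\cosh x\le e^{x^2/2}$; pinning down that last inequality cleanly is where the real work lies.
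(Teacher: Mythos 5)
Your scaffolding is sound and, in fact, lossless: for fixed mean $m=\int_\Omega g\,d\mu$, the supremum of $\int_\Omega \psi(g)\,d\mu$ over all admissible $g$ is \emph{exactly} the concave envelope $\hat\psi(m)$ (Jensen gives one direction, two-valued $g$ supported where $\psi$ touches the envelope gives the other), your disposal of $t\le 0$ is fine, your formula for $\hat\psi$ is the correct envelope, and your $\alpha=1$ computation ($\tfrac{1-e^t}{1-e^{-t}}=-e^t$, maximum $\cosh^2(t/2)\le e^{t^2/4}$) checks out. For reference, the paper itself offers no proof to compare against --- it imports the lemma from \cite{talagrand1995concentration,talagrand1996new} --- so the attempt must stand on its own, and it stops exactly where the lemma lives. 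On the affine piece the endpoints give $F(e^{-t/\alpha})=F(1)=1$, and the interior stationary point gives $F(m^\star)=\frac{\alpha^\alpha}{(\alpha+1)^{\alpha+1}}\frac{C^{\alpha+1}}{L^{\alpha}}$ with $C=\frac{e^t-e^{-t/\alpha}}{1-e^{-t/\alpha}}$ and $L=\frac{e^t-1}{1-e^{-t/\alpha}}$; substituting $t=2\alpha v$ and using $e^x-1=2e^{x/2}\sinh(x/2)$ (all pure exponential factors cancel), the inequality you still owe is
\[
\frac{\alpha^\alpha}{(\alpha+1)^{\alpha+1}}\cdot\frac{\sinh^{\alpha+1}\big((\alpha+1)v\big)}{\sinh(v)\,\sinh^{\alpha}(\alpha v)}\;\le\;\exp\left(\frac{\alpha(\alpha+1)v^2}{2}\right),\qquad v>0 .
\]
A Taylor expansion shows the two sides agree through order $v^2$, so there is no slack anywhere: because your reduction is lossless, this scalar inequality is exactly as hard as the lemma itself, and no soft estimate or ``same pattern as $\alpha=1$'' extrapolation can close it --- it is precisely the content of the constant $\frac{t^2(\alpha+1)}{8\alpha}$. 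The gap also matters for this paper specifically: Appendix \ref{cubeProof} invokes the lemma with the optimized choice $\alpha=\sqrt{2\epsilon^{2p}/(n\log(1/\sigma))}-1$ to obtain \eqref{superTightBound}, which the $p=0$ case of Theorem \ref{existTheorem} requires, so verifying only $\alpha=1$ does not cover the use made of the lemma.

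The good news is that your route can be completed cleanly. Set $g(x)=x\log\frac{\sinh(vx)}{vx}$; the displayed inequality is exactly $g(\alpha+1)-g(\alpha)-g(1)\le\frac{\alpha(\alpha+1)}{2}v^2$, and since $g(0)=0$ the left side equals $\int_0^1\int_0^\alpha g''(s+r)\,dr\,ds$ while the right side equals $\int_0^1\int_0^\alpha v^2(s+r)\,dr\,ds$. It therefore suffices to prove the pointwise bound $g''(x)\le v^2x$, which after setting $y=vx$ and multiplying through by $x$ reads $2y\coth y-\frac{y^2}{\sinh^2 y}\le 1+y^2$. Clearing $\sinh^2 y$ and substituting $w=2y$ turns this into $2w\sinh w-w^2\le\big(2+\frac{w^2}{2}\big)(\cosh w-1)$, which holds termwise in the power series: the coefficients of $w^2$ and $w^4$ coincide, and for $w^{2m}$ with $m\ge 3$ the left-minus-right coefficient is $-(2m-1)(m-2)/(2m)!\le 0$. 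With that one elementary series inequality supplied, your envelope-plus-Jensen argument becomes a complete, self-contained proof of the lemma --- and a rather transparent alternative to chasing down Talagrand's original.
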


Our proof of Lemma \ref{cubeiso} follows the three-step process of Talagrand illustrated in \cite{talagrand1995concentration}.  We begin by proving the bound   
  \aln{ \label{intToBound}
  \int e^{tf(x,\setA)}  dx \le \frac{1}{\sigma^\alpha[\setA]}\exp\left(\frac{nt^2(\alpha+1)}{8\alpha}\right)
  }
where $f(x,\setA) = \min_{y\in \setA} \sum_i |x_i-y_i|^p=d_p^p(x,\setA)$ is a measure of distance from $\setA$ to $x$, and $\alpha,t$ are arbitrary positive constants.  Once this bound is established, a Markov bound can be used to obtain the final result.  Finally, constants are tuned in order to optimize the tightness of the bound.

We start by proving the bound in \eqref{intToBound} using induction on the dimension.
The base case for the induction is $n=1,$ and we have
  $$\int e^{tf(x,\setA)}  dx \le \sigma[\setA]+\int_{\setA^c} e^{{tf(x,\setA)}}dx\le \sigma[\setA]+\int_{\setA^c} 1dx  \le  \sigma[\setA]+(1-\sigma[\setA])e^{t}\le \frac{1}{\sigma^\alpha[\setA]}\exp\left(\frac{t^2(\alpha+1)}{8\alpha}\right).$$ 

We now prove the result for $n$ dimensions using the inductive hypothesis. We can upper bound the integral by integrating over ``slices'' along one dimension.  Let $\setA\subset [0,1]^n.$ Define
$$\setA_\omega = \{z\in \reals^{n-1} \, |\, (\omega,z) \in \setA\} 
\text{ and }
\setB = \{z\in \reals^{n-1} \, |\, (\omega,z) \in \setA \text{ for some } \omega\}.$$
Clearly, the distance from $(\omega,z)$ to $\setA$ is at most the distance from $z$ to $\setA_\omega,$ and so
$$\int e^{tf(x,\setA)}  dx \le \int_{\omega\in[0,1]} \int_{z\in [0,1]^{n-1}} e^{tf(z,\setA_\omega)}  dz\,dx \le \int_{\omega\in[0,1]}  \frac{1}{\sigma^\alpha[\setA_\omega]}\exp\left(\frac{(n-1)t^2(\alpha+1)}{8\alpha}\right). $$

We also have that the distance from $x$ to $\setA$ is at most one unit greater than the distance from $x$ to $\setB$.  This gives us 
$$\int e^{tf(x,\setA)}  dx \le  \int_{(\omega,z)\in [0,1]^{n}} e^{t(f(x,\setB)+1)}\le  e^t \int_{(\omega,z)\in[0,1]^n} e^{tf(x,\setB)}\le   \frac{e^t}{\sigma^\alpha[\setB]}\exp\left(\frac{(n-1)t^2(\alpha+1)}{8\alpha}\right). $$
Applying \eqref{talagrand} gives us

\aln{
\int e^{tf(x,\setA)}  dx &\le 
 \int_{\omega\in[0,1]} \min\left (  \frac{e^t}{\sigma^\alpha[\setB]}\exp\left(\frac{(n-1)t^2(\alpha+1)}{8\alpha}\right) ,
   					\frac{1}{\sigma^\alpha[\setA_\omega]}\exp\left(\frac{(n-1)t^2(\alpha+1)}{8\alpha}\right)
  \right) \nonumber\\
  & = \exp\left(\frac{(n-1)t^2(\alpha+1)}{8\alpha}\right) \frac{1}{\sigma^\alpha[\setB]} \int_{\omega\in[0,1]} \min\left (  e^t ,
   					\frac{\sigma^\alpha[\setB]}{\sigma^\alpha[\setA_\omega]}
  \right). \label{beforetalagrand}
}
Now, we apply lemma \ref{talagrand} to \eqref{beforetalagrand} with $g(\omega) = \alpha[\setA_\omega]/\alpha[\setB]$ to arrive at \eqref{intToBound}.

The second step of the proof is to produce a Markov inequality from \eqref{intToBound}.  For the bound in \eqref{intToBound} to hold, we need
 
 \aln{
    1-\sigma[\setA(\epsilon, d_p)] =  \sigma\{x\, | \, f(x)>\epsilon^p\} \le \frac{\int e^{tf(x,A)}  dx}{e^{t\epsilon^p}} \le  \frac{\exp\left(\frac{nt^2(\alpha+1)}{8\alpha}\right)}{\sigma^\alpha[\setA]e^{t\epsilon^p}}.
 }
 
 The third step is to optimize the bound by choosing constants.
 We minimize the right hand side by choosing $t=\frac{4\alpha \epsilon^p}{n(\alpha+1)}$ to get
  \aln{
    1-\sigma[\setA(\epsilon, d_p)] \le  \frac{\exp\left( -\frac{2\alpha \epsilon^{2p}}{n(\alpha+1)}  \right)}{\sigma^\alpha[\setA]}.
 }
 Now, we can simply choose $\alpha=1$ to get the simple bound
  \aln{
    1-\sigma[\setA(\epsilon, d_p)] \le  \frac{\exp\left( - \epsilon^{2p}/n  \right)}{\sigma[\setA]},
 }
 or we can choose the optimal value of
   $\alpha = \sqrt{  \frac{2 \epsilon^{2p}}{n\log(1/\sigma)}  } -1,$
   which optimizes the bound in the case $\epsilon^{2p} \ge \frac{n}{2}\log(1/\sigma(\setA)).$ We arrive at
   \aln{ \label{superTightBound}
    1-\sigma[\setA(\epsilon, d_p)] \le \exp \left(  -\frac{2}{n} \left(  \epsilon^{p} - \sqrt{ n \log(\sigma^{-1}[\setA])/2} \right)^2    \right).
 }
 This latter bound is stronger than we need to prove Lemma \ref{cubeiso}, but it will come in handy later to prove Theorem~\ref{existTheorem}.

\section{Proof of Theorems \ref{cubeTheorem} and \ref{sparseTheorem}}
We combine the proofs of these results since their proofs are nearly identical.  The proofs closely follow the argument of Theorem \ref{sphereTheorem}.

Choose a class $c$ with $f_c \le \half$ and let $\setR=\{x|\setC(x)=c\}$ denote the subset of the cube lying in class $c$ according to the classifier $\setC$. Let $\overline \setR$ be the complement, who's $\ell_p$ expansion is denoted $\overline \setR(\epsilon;d_p)$. Because $\overline \setR$ covers at least half the cube, we can invoke Lemma \ref{cubeiso}.  We have that
$$\vol[\overline \setR(\epsilon;h)] \ge 1- \delta,$$
where
 \aln{
 \delta = \begin{cases} \frac{\exp(-2\pi n^{1-2/p^* } \epsilon^2  )}{2\pi \epsilon n^{1/2-1/p^*} } \text{, for Theorem \ref{cubeTheorem} and }  \\
    2U_c\exp(-\epsilon^2/n), \text{ for Theorem \ref{sparseTheorem}}.
    \end{cases} 
 } 

The set $\overline{\overline \setR(\epsilon;h)}$ contains all points that are correctly classified and safe from adversarial perturbations.  This region has volume at most $\delta$, and the probability of a sample from the class distribution $\rho_c$ lying in this region is at most 
 $U_c \delta.$
 We then subtract this from 1 to obtain the mass of the class distribution lying in the ``unsafe'' region  $\overline \setR_c.$

\section{Proof of Theorem \ref{existTheorem}}

Let $\setA$ denote the support of $p_c,$ and suppose that this support has measure $\vol[\setA]=\eta.$   We want to show that, for large enough $\epsilon,$ the expansion $\setA(\epsilon,d_p)$ is larger than half the cube.  Since class $c$ occupies less than half the cube, this would imply that $\setA(\epsilon,d_p)$ overlaps with other classes, and so there must be data points in $\setA$ with $\epsilon$-adversarial examples.

We start with the case $p>0,$ where we bound $\setA(\epsilon,d_p)$ using \eqref{tightPhiBound} of Lemma \ref{cubeiso}.   To do this, we need to approximate $\Phi^{-1}(\eta).$  This can be done using the inequality
 $$\Phi(\alpha) = \frac{1}{2\pi}\int_{-\infty}^{\alpha} e^{-t^2/2} dt  \le \half e^{-\alpha^2/2},$$
 which holds for $\alpha <0.$
Rearranging, we obtain
 \aln{\label{alphabound}
     \alpha \ge  -\sqrt{\log\frac{1}{4\Phi(\alpha)^2}}.
 }
 Now, if $\alpha =\Phi^{-1}(\eta),$ then $\Phi(\alpha)=\eta,$ and \eqref{alphabound} gives us
 $
 \alpha \ge  -\sqrt{\log\frac{1}{4\eta^2}}.
 $
 Plugging this into \eqref{tightPhiBound} of Lemma \ref{cubeiso}, we get
  $$\vol[\setA(\epsilon,d_p)] \ge \Phi(\alpha+\epsilon)\ge \Phi\left(-\sqrt{\log\frac{1}{4\eta^2}}+\frac{\sqrt{2\pi n}}{n^{1/p^*}}\epsilon\right).$$
The quantity on the left will be greater than $\half,$ thus guaranteeing adversarial examples, if 
$$ \frac{\sqrt{2\pi n}}{n^{1/p^*}}\epsilon > \sqrt{\log\frac{1}{4\eta^2}}.$$
This can be re-arranged to obtain the desired result.

In the case $p=0,$ we need to use \eqref{superTightBound} from the proof of Lemma \ref{cubeiso} in Appendix \ref{cubeProof}, which we restate here
  $$\vol[\setA(\epsilon, d_0)] \ge 1-\exp \left(  -\frac{2}{n} \left( \epsilon - \sqrt{ n \log(1/\eta)/2} \right)^2    \right).$$
  This bound is valid, and produces a non-vacuous guarantee of adversarial examples, if
    $$\exp \left(  -\frac{2}{n} \left( \epsilon - \sqrt{ n \log(1/\eta)/2} \right)^2    \right) < \half.$$
    which holds if  
     $$\eta > \exp\left( - \frac{2\left(\epsilon -\sqrt{n\log 2/2}\right)^2}{n} \right).$$

\section{Proof of Theorem \ref{mnistTheorem}}

Assume that any MNIST classifier can be fooled by perturbations of size at most $\epsilon$ with probability at least $p$.  To begin, we put a bound on the susceptibility of any $b$-MNIST classifier (for $b\ge1$) under this assumption.  We can classify MNIST images by upsampling them to resolution $28b\times 28b$ and feeding them into a high-resolution ``back-end'' classifier.  After upsampling, an MNIST image with perturbation of norm $\epsilon$ becomes a $28b\times 28b$ image with perturbation of norm $b\epsilon.$   The classifier we have constructed takes low-resolution images as inputs, and so by assumption it is fooled with probability at least $p$.
However, the low-resolution classifier is fooled only when the high-resolution ``back-end'' classifier is fooled, and so the high-resolution classifier is fooled with probability at least $p$ at well.  Note that we can build this two-scale classifier using any high-resolution classifier as a back-end, and so this bound holds uniformly over all high-resolution classifiers.

Likewise, suppose we classify $b$-MNIST images (for integer $b\ge 1$) by downsampling them to the original $28\times 28$ resolution (by averaging pixel blocks) and feeding them into a ``back-end'' low-resolution classifier. 
After downsampling, a $28b\times 28b$ image with perturbation of norm $b\epsilon$ becomes a $28\times 28$ image with perturbation of norm at most  $\epsilon.$  Whenever the high-resolution classifier is fooled, it is only because the back-end classifier is fooled by a perturbation of size at most $\epsilon,$ and this happens with probability at least $p$.  

\end{document}